\newcommand{\argmax}{\operatornamewithlimits{argmax}}
\newcommand{\E}{\mathbb{E}}
\newcommand{\I}{\mathbb{I}}
\newcommand{\Z}{\mathbb{Z}}
\newcommand{\cA}{\mathcal{A}}
\newcommand{\cM}{\mathcal{M}}
\newcommand{\prob}{\text{Pr}}
\newcommand{\OPT}{{\it OPT}}
\algrenewcommand\algorithmicrequire{\textbf{Input:}}
\algrenewcommand\algorithmicensure{\textbf{Output:}}
\newcommand{\compilehidecomments}{false}%HIDE comments
    \newcommand{\wei}[1]{}
    \newcommand{\haoyu}[1]{}
    \newcommand{\wei}[1]{{\color{blue!50!black}  [\text{Wei:} #1]}}
    \newcommand{\haoyu}[1]{{\color{brown!60!black} [\text{Haoyu:} #1]}}
\newcommand{\compilefullversion}{true}%SHOW short version
    \newcommand{\OnlyInFull}[1]{}
    \newcommand{\OnlyInShort}[1]{#1}
    \newcommand{\OnlyInFull}[1]{#1}%
    \newcommand{\OnlyInShort}[1]{}%
\begin{document}
\title{Stochastic One-Sided Full-Information Bandit}
%
%\titlerunning{Abbreviated paper title}
% If the paper title is too long for the running head, you can set
% an abbreviated paper title here
%
\author{Haoyu Zhao\inst{1} \and
Wei Chen\inst{2} }
%
%\authorrunning{F. Author et al.}
% First names are abbreviated in the running head.
% If there are more than two authors, 'et al.' is used.
%
\institute{Institute for Interdisciplinary Information Sciences, Tsinghua University, Beijing, China,  \email{zhaohy16@mails.tsinghua.edu.cn} \and
Microsoft Research, Beijing, China, \email{weic@microsoft.com}}
\maketitle              % typeset the header of the contribution
\begin{abstract}
In this paper, we study the stochastic version of the one-sided full information bandit problem, 
    where we have $K$ arms $[K] = \{1, 2, \ldots, K\}$, and playing arm $i$ would gain reward from an unknown distribution for arm $i$
    while obtaining reward feedback for all arms $j \ge i$.
    One-sided full information bandit can model the online repeated second-price auctions, where the auctioneer could select the reserved price in each round and the bidders only reveal their bids when their bids are higher than the reserved price. 
    In this paper, we present an elimination-based algorithm to solve the problem. Our elimination based algorithm achieves distribution independent regret upper bound $O(\sqrt{T\cdot\log (TK)})$, and distribution dependent bound $O((\log T + \log K)f(\Delta))$, where $T$ is the time horizon, $\Delta$ is a vector of gaps between the mean reward of arms and the mean reward of the best arm, and $f(\Delta)$ is a formula depending on the gap vector that we will specify in detail. 
    Our algorithm has the best theoretical regret upper bound so far.
    We also validate our algorithm empirically against other possible alternatives.

\keywords{Online Learning  \and Multi-armed Bandit}
\end{abstract}
\section{Introduction}

Stochastic multi-armed bandit (MAB) has been extensively studied in machine learning and sequential decision making. 
The most simple version of this problem consists of $K$ arms, where each arm has an unknown distribution of the reward. The task is to sequentially select one arm at each round so that the total expected reward is as high as possible. In each round, we will gain the reward and only observe the reward of the arm we choose. The trade-off between exploration and exploitation appears extensively in the MAB problem: On the one hand, one might try to play an arm which is played less to explore whether it is good, and on the other hand, one might choose to play the arm with the largest average reward so far to cumulate reward. 
MAB algorithms are measured by their {\em regret}, %or more formally, the pseudo-regret, 
which is the difference in expected cumulative reward  between the algorithm
    and the optimal algorithm that always chooses the best arm.

A variant of the stochastic MAB problem is the {\em one-sided full-information bandit}, where there is a set of arms $1,2,\dots,K$ and at round $t$ we choose arm $I_t$, we will gain the reward of $I_t$ at time $t$ and observe the rewards of all arms $i \ge I_t$ at time $t$
    (Section \ref{sec-model}). 
The adversarial version of the one-sided full-information bandit is first introduced in \cite{DBLP:conf/colt/Cesa-BianchiGGG17}, and in this paper, we study it stochastic version. 

\setcounter{footnote}{0}

One-sided full-information bandit can find applications such as in online auction.
Consider for example the second-price auction with a reserve price.
In each round, the seller (or auctioneer) sets a reserve price from a finite set of reserve price choices. Each buyer (or bidder) draws a value from its valuation distribution 
    (unknown to the seller),
    and only submits her value as the bid when her value is at least as high as 
    the reserve price. The seller would observe these bids, give the item to
    the highest bidder and collect the second highest bid price (including the reserve price)
    as its reward from the highest bidder.
In this case, we can treat each reserve price as an arm. In each round $t$ after
    the seller announces the reserve price $r_t$, she will see all bids higher than $r_t$, 
    and thus she would know the reward she could collect for all reserve prices
    higher than or equal to $r_t$, which corresponds to the case of one-sided
    full-information feedback.\footnote{Note that the second-price auction is truthful
    in a single round, but in multi-rounds, it may not be truthful since the bidders
    may want to lower their bids first so that the seller would learn a lower reserve
    price. The truthfulness is not the main concern of this paper and its discussion
    is beyond the scope of this paper.}

In this paper, we present an elimination-based algorithm for the stochastic one-sided 
    full-information bandit and 
    prove the distribution-independent bound as $O(\sqrt{T(\log T + \log K)})$ and
    the distribution-dependent bound as 
    $O((\log T + \log K)f(\Delta))$, where $T$ is the time horizon, $\Delta$
    is a vector of gaps between the mean reward of arms and the mean reward of the best arm, and $f(\Delta)$ is a formula depending on the gap vector
    that we will specify in Theorem~\ref{dep} (Section \ref{sec-alg-ana}).
We also adopt an existing analysis to show a distribution-independent regret lower bound of 
    $\Omega(\sqrt{T\log K})$ for this case
    (Section~\ref{sec-low}), which indicates
    that our algorithm achieves almost matching upper bound.
We conduct numerical experiments to show that our algorithm significantly outperforms
    an existing algorithm designed for the adversarial case
    (Section \ref{sec-exp}).
The empirical results also indicate that a UCB variant has better empirical performance, but it so far has no tight theoretical analysis, and thus our elimination-based algorithm is still the one with the best theoretical guarantee.

Due to space constraint, some proofs are moved to a supplementary material submitted together with the main paper.

%The structure of this paper comes as follow. In section \ref{sec-model}, we define the one-sided full information bandit formally. In section \ref{sec-alg-ana}, we present our elimination based algorithm and show the distribution independent and the distribution dependent regret upper bound for our elimination based algorithm. In section \ref{sec-low}, we show a regret lower bound for the one-sided full information bandit. In section \ref{sec-exp}, we show the numerical experiments between our elimination based algorithm and the existed algorithm under the stochastic case.
%
\subsection{Related Work}
\textbf{Multi-armed bandit:}     
Multi-armed bandit (MAB) is originally introduced by Robbins~\cite{Robbins52}, and 
    has been extensively studied in the literature (c.f. \cite{BF85,BCB12}).
MAB could be either stochastic, where the rewards of arms are drawn from unknown distributions,
    or adversarial, where the rewards of arms are determined by an adversary.
Our study in this paper belongs to the stochastic MAB category.
The classical MAB algorithm includes UCB \cite{AuerCF02} and 
    Thompson sampling \cite{thompson1933likelihood} for the stochastic setting
    and EXP3 \cite{AuerCFS02} for the adversarial setting.

\textbf{Multi-armed bandit with graph feedback structure:} 
One-sided full-information bandit can be viewed as a special case of the MAB problem with graph feedback structure.
The arm feedback structure can be represented as a graph (undirected or directed, 
    with or without self-loops), where vertices are arms, and when an arm is played,
    the rewards of all its neighbors (or out-neighbors) can be observed.
The one-sided full-information bandit corresponds to a feedback graph with directed
    edges pointing from arm $i$ to arm $j$ for all $i\le j$.
The first paper that introduces MAB with graph feedback is \cite{mannor2011bandits}. The authors of this paper use the independent number and the clique-partition number to derive the upper and lower bound for the regret. The main results of \cite{mannor2011bandits} is the upper and lower bound for the regret for undirected graph feedback MAB problem. 
Although the bound is tight in the undirected case,  there is a gap between the regret upper and lower bounds for directed graphs. 
When translated to our one-sided full information setting, their regret upper bound is $\tilde O(\sqrt{KT})$ but the lower bound is $\tilde \Omega(\sqrt{T})$, 
 which are not as tight as we provide in this paper in both upper and lower bounds.
In \cite{alon2013bandits}, the authors consider the adversarial MAB with general directed
    feedback graph and close the gap up to some logarithmic factors.
However, when applying their results to the one-sided full-information bandit setting,
    their upper and lower bounds are all worse than ours by a logarithmic factor.
Moreover, we provide distribution-dependent bound that only works for the stochastic setting.
One-sided full-information bandit is originally proposed in~\cite{DBLP:conf/colt/Cesa-BianchiGGG17}, which studies the adversarial setting
and proposes a variant of EXP3 algorithm EXP3-RTB to solve this problem in the
adversarial setting. Their work focuses on the more general bandit on metric space, and
    ignores the difference in the logarithmic factors.
Stochastic MAB with undirected graph feedback is studied in \cite{CaronKLB12}, 
    which proposes a variant of UCB algorithm UCB-N that essentially acts as UCB but updates
    all observed arms instead of only the played arm in each round.
The authors derive a regret upper bound based on the cliques in the feedback graph, but in the one-sided full-information setting the cliques are 
    reduced to singletons and their regret result is reduced to the classical UCB, which is significantly worse than the regret of our algorithm.
We include UCB-N in our experiments, which demonstrate good performance of UCB-N, but we cannot provide a better theoretical regret bound for it, and this task is left as a future work item.
    
    \section{Model}\label{sec-model}
    In this section, we specify a multi-armed bandit model called `one-sided full information bandit', which is highly related with the online auction problem. Suppose that there are $K$ arms $[K] = \{1,2,\dots,K\}$ in total. Each time we play the arm $I_t$ at round $t$, we will observe the value of arm $i$, denoted as $X^{(t)}_i$, for all $i\ge I_t$. We study this problem under the stochastic settings, i.e. in each round $t$, the realized value $X^{(t)}_i$ is drawn from a distribution $\nu_i$, and $X^{(t)}_i$ is independent to $X^{(t')}_i$, for all $t' < t$. The formal definition of the bandit model is given as follow.
    \begin{definition}[One-sided Full Information Bandit]
        There is a set of arms $\{1,2,\dots,K\}$, and for each arm $i\in [K]$, it corresponds to an unknown distribution $\nu_i$ with support $[0,1]$, where $\nu_i$ is the marginal distribution of $\nu$ with support $[0,1]^K$. In each round $t$, the environment draws a reward vector $X^{(t)} = (X^{(t)}_1,\dots,X^{(t)}_K)$, where $X^{(t)}$ is drawn from distribution $\nu$. The player then chooses an arm $I_t$ to play, gains the reward $X^{(t)}_{I_t}$ and observes the reward of arms $I_t,I_t+1,\dots,K$, i.e. observes $X^{(t)}_{i},\forall i \ge I_t$.
    \end{definition}
    \begin{remark}
        In the definition, we explicitly give the joint distribution $\nu$ to describe the value distribution of arms, and denote arm $i$'s reward distribution 
        $\nu_i$ as the marginal distribution of $\nu$.
        This is to emphasize the fact that the distributions corresponding to different arms can be correlated.
        
    \end{remark}
    The performance of the multi-armed bandit algorithm is measured by regret. In the stochastic bandit scenario, people will use the pseudo-regret to measure the performance more often. The pseudo regret is defined as follow,
    \begin{definition}[Pseudo-regret]
        Let $I_t$ denote the arm that is chosen by algorithm $\cA$ to play at round $t$, then the pseudo-regret of the algorithm $\cA$ for $T$ rounds is defined as $\E[\sum_{i=t}^T (X^{(t)}_{i^*} - X^{(t)}_{I_t})]$, where $i^*$ denotes the best arm in expectation, i.e. $\E[X^{(t)}_{i^*}] \ge \E[X^{(t)}_i]$ for all $i\in [K]$.
    \end{definition}

    In this paper, we only consider pseudo-regret, and henceforth, for convenience, we simply use the term regret to refer to pseudo-regret in
        the remaining text.
    For convenience, we will use $\mu_i = \E[X^{(t)}_i]$ to denote the mean of the reward of arm $i$, and $\mu_{i^*}$ to denote the mean of the best arm. We will also use $\Delta_i = \mu_{i^*} - \mu_i$ to denote the difference of the mean between arm $i$ and the best arm $i^*$.

    \section{Algorithm and Regret Analysis}\label{sec-alg-ana}
    
    \subsection{Elimination Based Algorithm}
    
    In this section, we present an elimination-based algorithm to tackle the one-sided full information stochastic bandit problem. We first show an algorithm with known time horizon $T$. Our algorithm can be generally described as: We maintain a set of arms $S_t$ during the execution of the algorithm. At each round, we will play the arm that has the smallest index in $S_t$, i.e. $I_t\leftarrow \min_{i\in S_t} i$. At first, $S_1 = [K]$ is the set of all arms, and we will play arm $1$ in the first round. At each time $t$ we observe the rewards for the arms $I_t,I_{t+1},\dots,K$, update the empirical mean of each arm and update the set $S_t$ into $S_{t+1}$. 
%    \wei{The above sentence used `received' reward, which is ambiguous. Let's be consistent now: in round $t$, the player {\em gains} the reward of the
%        played arm, and {\em observes} the rewards of all arms with id greater than or equal to that of the played arm. Let's always use these two
%        verbs, gain and observe, to make it precise.}
%%    \wei{Let's also consistently use `round' instead of `time'.}
    At each round  $t$, we will delete the arms in $S_t$ whose empirical means are much smaller than the best empirical mean in $S_t$. More specifically, we have 
%    \wei{$\hat \mu_i$ is not defined. Also, do you want it to be $\hat{\mu}_{i,t}$?}
    \vspace{-2mm}
    \[m_{t} = \argmax_{i\in S_{t}}\hat \mu_{i,t},\]
    \vspace{-2mm}
    where $\hat\mu_{i,t} = \frac{1}{t}\sum_{s=1}^t X_i^{(s)}$ is the empirical mean of arm $i$ after $t$ rounds, and 
    \vspace{-2mm}
    \[S_t = \{i\in S_{t-1} \mid \hat \mu_{m_{t-1},t-1} - \hat \mu_{i,t-1} \le 2\rho_t\},\]
    \vspace{-2mm}
    where $\rho_t$ is the {\em confidence radius} and $\rho_t = \sqrt{\frac{\ln (KT^2)}{2(t-1)}}$(The confidence radius $\rho_1$ at around $t=1$ is $\infty$). Our whole algorithm is shown in Algorithm \ref{alg-elim}.
    
    \begin{algorithm}[t]
        \caption{ELIM: Elimination-based algorithm with known time horizon $T$}
        \label{alg-elim}
        \begin{algorithmic}[1]
            \Require Time horizon $T$.
            \State $S_0 \leftarrow \{1,2,\dots,K\}$.
            \State $\forall i, \hat\mu_{i,0} = 0$.
            \For{$t=1,2,\dots,T$}
            \State $\rho_t \leftarrow \sqrt{\frac{\ln (KT^2)}{2(t-1)}}$.(The confidence radius $\rho_1$ at time $t=1$ is $\infty$). 
            \State $m_{t-1} \leftarrow \argmax_{i\in S_{t-1}}\hat \mu_{i,t-1}$.
            \State $S_t \leftarrow \{i\in S_{t-1} \mid \hat \mu_{m_{t-1},t-1} - \hat \mu_{i,t-1} \le 2\rho_t\}$.
            \State Play the arm $j$, where $j\leftarrow \min_{i\in S_t} i$.
            \State Observe the reward $X_i^{(t)},\forall i\ge j$.
            \State $\forall i \in S_t, \hat\mu_{i,t} \leftarrow \hat\mu_{i,t-1}\cdot\frac{t-1}{t} + X_i^{(t)}\cdot \frac{1}{t}$.
            \EndFor
        \end{algorithmic}
    \end{algorithm}

%\wei{I was trying to use hyperref for convenience in reading, but somehow it is not working and I have to comment it out. Please help figuring out
%    how to use hyperref with llncs.}

    We will show that our algorithm has distribution-independent regret bounded $O(\sqrt{T(\ln K + \ln T)})$, where the best regret bound for one-sided full information bandit till now is $O(\sqrt{T \ln K \ln T})$, which is implied in \cite{DBLP:conf/colt/Cesa-BianchiGGG17}. Besides the distribution-independent bound, we also give a distribution-dependent bound. The following two theorems show our results, and their proofs will be provided in the next section.

    \begin{restatable}{theorem}{distindepthm}{\rm\bf (Distribution independent regret bound)}\label{indep}
        Given the time horizon $T$, the regret of Algorithm \ref{alg-elim} is bounded by $4\sqrt{2T\ln(KT^2)}+3$.
    \end{restatable}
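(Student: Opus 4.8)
The plan is the standard ``clean event'' argument for elimination algorithms, where the only nonroutine point is the observation bookkeeping forced by the one-sided feedback.

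\textbf{Step 1 (clean event).} I would first record the structural fact that the active sets are nested, $S_t \subseteq S_{t-1}$, so any arm $i \in S_{t-1}$ also lies in $S_s$ for every $s \le t-1$, and hence (since $I_s = \min S_s \le i$) its reward $X_i^{(s)}$ is observed in each of the first $t-1$ rounds. Unrolling the update in line~9 of Algorithm~\ref{alg-elim} then shows that for such $i$ the maintained quantity $\hat\mu_{i,t-1}$ really equals $\frac{1}{t-1}\sum_{s=1}^{t-1}X_i^{(s)}$, an average of $t-1$ i.i.d.\ samples with $X_i^{(s)}\in[0,1]$. Define the clean event $\cE$ that $|\hat\mu_{i,t-1}-\mu_i|\le\rho_t$ for every round $t$ and every $i\in S_{t-1}$. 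By Hoeffding applied to the fixed i.i.d.\ sequence $X_i^{(1)},X_i^{(2)},\ldots$ (intersecting with the event that $i$ is still active can only shrink the probability), each such deviation fails with probability at most $2\exp(-2(t-1)\rho_t^2)=2/(KT^2)$ by the choice $\rho_t=\sqrt{\ln(KT^2)/(2(t-1))}$; a union bound over the at most $KT$ pairs $(i,t)$ gives $\prob[\cE^c]\le 2/T$.

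\textbf{Step 2 (consequences of $\cE$).} On $\cE$ I would show by induction on $t$ that $i^*$ is never eliminated: if $i^*\in S_{t-1}$ then $\hat\mu_{m_{t-1},t-1}\le\mu_{m_{t-1}}+\rho_t\le\mu_{i^*}+\rho_t\le\hat\mu_{i^*,t-1}+2\rho_t$, so $i^*$ survives into $S_t$. Next, for the played arm $I_t=\min S_t$, survival of $I_t$ in $S_t$ gives $\hat\mu_{I_t,t-1}\ge\hat\mu_{m_{t-1},t-1}-2\rho_t\ge\hat\mu_{i^*,t-1}-2\rho_t\ge\mu_{i^*}-3\rho_t$ (using that $m_{t-1}$ maximizes the empirical mean over $S_{t-1}\ni i^*$), while $\hat\mu_{I_t,t-1}\le\mu_{I_t}+\rho_t$; combining yields $\Delta_{I_t}=\mu_{i^*}-\mu_{I_t}\le 4\rho_t$. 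Since trivially $\Delta_{I_t}\le 1$ always, on $\cE$ we have $\Delta_{I_t}\le\min\{1,4\rho_t\}$ for every $t$ (the value being $1$ at $t=1$, where $\rho_1=\infty$).

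\textbf{Step 3 (summation).} Splitting the regret over $\cE$ and $\cE^c$ and using that each step costs at most $1$,
\[
\E\Big[\sum_{t=1}^T\Delta_{I_t}\Big]\le\sum_{t=1}^T\min\{1,4\rho_t\}+\prob[\cE^c]\cdot T\le 1+\sum_{t=2}^T 4\rho_t+2 .
\]
Plugging in $\rho_t$ and bounding $\sum_{t=2}^T(t-1)^{-1/2}=\sum_{s=1}^{T-1}s^{-1/2}\le\int_0^{T-1}s^{-1/2}\,ds=2\sqrt{T-1}\le 2\sqrt{T}$ gives $\sum_{t=2}^T 4\rho_t\le 4\sqrt{2T\ln(KT^2)}$, so the total is at most $4\sqrt{2T\ln(KT^2)}+3$, as claimed.

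\textbf{Main obstacle.} Once Step~1 is set up correctly the rest is routine. The point that needs genuine care is exactly that bookkeeping: because $S_{t-1}$ is itself a random set, one must argue that for every still-active arm the algorithm's running mean is an honest average of i.i.d.\ draws, so that Hoeffding applies and $\rho_t$ is calibrated to exactly $2/(KT^2)$ per pair, giving the total failure mass $2/T$ that absorbs the bad-event term. Everything else — never eliminating $i^*$, the per-step bound $\Delta_{I_t}\le 4\rho_t$, and the integral estimate — is standard for elimination-style MAB analyses.
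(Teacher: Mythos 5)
Your proposal is correct and follows essentially the same route as the paper's proof: a high-probability concentration event at radius $\rho_t$, survival of $i^*$, the per-round bound $\mu_{i^*}-\mu_{I_t}\le 4\rho_t$ for the played arm, and the identical summation $1+\sum_{t=2}^T 4\rho_t + 2 \le 4\sqrt{2T\ln(KT^2)}+3$. The only cosmetic difference is that you establish the good event by a single union bound over all $(i,t)$ pairs and then argue deterministically by induction, whereas the paper routes the same total failure mass of $2/T$ through per-round events (Definitions \ref{sampling} and \ref{procedure}, Lemmas \ref{sampling-nice-lemma} and \ref{hpe}).
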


    \begin{theorem}[Distribution dependent regret bound]\label{dep}
        Let $\{\Delta_{(i)}\}$ be a permutation of $\{\Delta_i\mid i\le i^*\}$, such that $\Delta_{(1)} \ge \Delta_{(2)} \ge \dots \ge \Delta_{(i*)} = 0$, 
%        \wei{based on the permutation, the last one should be $\Delta_{(K)} = \Delta_{i^*} = 0$, right?}
        and $C = 8\ln(KT^2)$. Given time horizon $T$, the regret of Algorithm \ref{alg-elim} is bounded by
        \vspace{-2mm}
        \begin{equation}\label{equ-dep}
            %\vspace{-2mm}
            \Delta_{(1)} + \frac{C}{\Delta_{(1)}} + C\sum_{i=2}^{i^*-1}\left(\frac{1}{\Delta^2_{(i)}} - \frac{1}{\Delta^2_{(i-1)}}\right)\Delta_{(i)}+2.\vspace{-2mm}
        \end{equation}
        \vspace{-2mm}
%        \wei{The $i^*$ in summation should be replaced with $K$, right?    }
    \end{theorem}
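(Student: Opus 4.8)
The plan is to separate a low-probability ``bad'' event, on which we use the trivial bound, from a ``good'' event on which all empirical means are accurate, and then turn a clean per-round regret bound into the stated distribution-dependent form by a level-counting argument. The structural fact that makes everything work is that, since $I_t = \min_{i\in S_t} i$ and the sets $S_t$ are nested and decreasing, playing $I_t$ reveals the reward of \emph{every} arm currently in $S_t$; hence if $i\in S_s$ then the algorithm's $\hat\mu_{i,s}$ is exactly the average of the first $s$ i.i.d.\ samples of arm $i$. I would define the good event $G$ as: for every arm $i\in[K]$ and every $s\in\{1,\dots,T\}$, the average of the first $s$ samples of arm $i$ lies within $\rho_{s+1}=\sqrt{\ln(KT^2)/(2s)}$ of $\mu_i$. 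By Hoeffding each such deviation has probability at most $2\exp(-2s\rho_{s+1}^2)=2/(KT^2)$, so a union bound over the at most $KT$ pairs gives $\Pr[\neg G]\le 2/T$; since the per-round regret is at most $1$, the contribution of $\neg G$ to the expected regret is at most $T\cdot\Pr[\neg G]\le 2$, which is the ``$+2$'' in \eqref{equ-dep}.

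Next, working on $G$, I would establish two facts. First, the best arm $i^*$ is never eliminated: by induction on $t$, if $i^*\in S_{t-1}$ then, using $m_{t-1}\in S_{t-1}$ and $\mu_{m_{t-1}}\le\mu_{i^*}$, we get $\hat\mu_{m_{t-1},t-1}-\hat\mu_{i^*,t-1}\le(\mu_{m_{t-1}}+\rho_t)-(\mu_{i^*}-\rho_t)\le 2\rho_t$, so $i^*\in S_t$. In particular $I_t=\min S_t\le i^*$ for all $t$, so every gap we ever pay belongs to $\{\Delta_{(1)},\dots,\Delta_{(i^*)}\}$. Second, any arm surviving to round $t\ge 2$ has a small gap: if $i\in S_t$, then since $i^*\in S_{t-1}$ and $m_{t-1}=\argmax_{j\in S_{t-1}}\hat\mu_{j,t-1}$ we obtain $2\rho_t\ge\hat\mu_{m_{t-1},t-1}-\hat\mu_{i,t-1}\ge(\mu_{i^*}-\rho_t)-(\mu_i+\rho_t)=\Delta_i-2\rho_t$, hence $\Delta_i\le 4\rho_t=\sqrt{C/(t-1)}$ with $C=8\ln(KT^2)$. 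Applying this to $i=I_t$ gives $\Delta_{I_t}\le\sqrt{C/(t-1)}$ for $t\ge 2$, while $\Delta_{I_1}=\Delta_1\le\Delta_{(1)}$.

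Finally I would convert $\Delta_{I_t}\le\sqrt{C/(t-1)}$ into the distribution-dependent form. Because $\Delta_{I_t}$ is one of the sorted gaps and is at most $\sqrt{C/(t-1)}$, it is at most $\Delta_{(\ell(t))}$, where $\ell(t):=\min\{j:\Delta_{(j)}^2(t-1)\le C\}$. Setting $T_j:=1+C/\Delta_{(j)}^2$ for $1\le j\le i^*-1$ and $T_0:=1$, one checks $T_1\le T_2\le\dots\le T_{i^*-1}$ and $\ell(t)=j\iff T_{j-1}<t\le T_j$, so the number of rounds $t\in\{2,\dots,T\}$ with $\ell(t)=j$ is at most $T_j-T_{j-1}$. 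Hence $\sum_{t=2}^T\Delta_{I_t}\le\sum_{j=1}^{i^*-1}\Delta_{(j)}(T_j-T_{j-1})$ (the $j=i^*$ level contributes $0$ since $\Delta_{(i^*)}=0$), and expanding $T_1-T_0=C/\Delta_{(1)}^2$ and $T_j-T_{j-1}=C(1/\Delta_{(j)}^2-1/\Delta_{(j-1)}^2)$ for $j\ge2$ gives $\sum_{t=2}^T\Delta_{I_t}\le \frac{C}{\Delta_{(1)}}+C\sum_{j=2}^{i^*-1}(\frac{1}{\Delta_{(j)}^2}-\frac{1}{\Delta_{(j-1)}^2})\Delta_{(j)}$. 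Adding $\Delta_{(1)}$ for round $1$ and the $2$ from $\neg G$ yields exactly \eqref{equ-dep}.

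The main obstacle is this last step. One must argue carefully that $\Delta_{I_t}$ ranges only over the gaps indexed by $i\le i^*$ (which is why the ``$i^*$ is never eliminated'' fact is essential), that ``rounding up'' the real threshold $\sqrt{C/(t-1)}$ to the nearest sorted gap is legitimate, and that the interval/telescoping bookkeeping is correct, including the boundary term at $j=1$ and the harmless $\Delta_{(i^*)}=0$ term. One also tacitly assumes the optimal arm among $\{1,\dots,i^*\}$ is unique so that $\Delta_{(j)}>0$ for $j<i^*$; ties contribute no regret and can be absorbed.
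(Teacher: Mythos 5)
Your proof is correct and reaches the stated bound, but the key combinatorial step is handled by a genuinely different route than the paper's. The paper likewise conditions on a high-probability ``nice'' event (its $\cM_T$, playing the role of your $G$, contributing the $+2$), and likewise uses that $i^*$ is never eliminated and that any surviving arm satisfies $\Delta_i\le 4\rho_t$; but it then passes to per-arm play counts $a_j$: using monotonicity of $I_t$ it shows that at the last play of arm $j$ one has $\sum_{i\le j}a_i\le C/\Delta_j^2+1$, bounds the regret by the optimal value of a linear integer program in the $a_j$'s, and proves a separate exchange (bubble-sort) lemma showing the worst case is the sorted gap sequence, finishing with an Abel-summation computation (its Lemma~\ref{opt}). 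You bypass both the integer program and the exchange argument by arguing per round: $\Delta_{I_t}\le 4\rho_t=\sqrt{C/(t-1)}$ for $t\ge 2$, plus a level-counting/telescoping argument over the sorted gaps, which yields the same closed form; the sortedness enters only through the map $t\mapsto\ell(t)$. This is a more direct derivation of the same quantity, at the price of no reusable statement about the worst-case gap ordering (which the paper also uses to motivate why the arm order matters). Both arguments rest on the same structural fact you isolate: every arm in $S_t$ is observed in every round, so empirical means are full-history averages.

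One small repair in your last step: the literal claim that the number of rounds with $\ell(t)=j$ is at most $T_j-T_{j-1}$ can fail by integer rounding (an interval of length less than one may contain an integer). The conclusion survives if you instead bound the cumulative count $N_j:=|\{t\in\{2,\dots,T\}:\ell(t)\le j\}|\le T_j-1=C/\Delta_{(j)}^2$ and apply Abel summation: $\sum_{t=2}^{T}\Delta_{(\ell(t))}=\sum_{j=1}^{i^*-1}N_j\bigl(\Delta_{(j)}-\Delta_{(j+1)}\bigr)\le\sum_{j=1}^{i^*-1}\frac{C}{\Delta_{(j)}^2}\bigl(\Delta_{(j)}-\Delta_{(j+1)}\bigr)$, which after rearrangement is exactly $\frac{C}{\Delta_{(1)}}+C\sum_{j=2}^{i^*-1}\bigl(\frac{1}{\Delta_{(j)}^2}-\frac{1}{\Delta_{(j-1)}^2}\bigr)\Delta_{(j)}$; adding $\Delta_{(1)}$ for round $1$ and the $2$ from $\lnot G$ gives Eq.~\eqref{equ-dep}. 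With that substitution (and your noted convention for ties/zero gaps, which the theorem statement itself tacitly assumes), the proof is complete.
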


    Note that the standard UCB algorithm will lead to $O(\sum_{i\in [K], \Delta_i > 0} \frac{1}{\Delta_i}\ln T) = O(\sum_{i=1}^{K-1} \frac{1}{\Delta_{(i)}}\ln T)$ distribution dependent regret. 
    In Eq.~\eqref{equ-dep}, if we ignore the term $- \frac{1}{\Delta^2_{(i-1)}}$ in the summation, we could obtain the same order regret upper bound.
    Thus, the regret obtained above is typically better than the UCB regret.
    To see more clearly the difference, 
    consider the case when the best arm $i^*$ has mean $\mu_{i^*} = \frac{1}{2} + \varepsilon$ and all other arms $i\neq i^*$ have mean $\mu_i = \frac{1}{2}$, the original UCB will lead to $\frac{K-1}{\varepsilon}\ln T$ regret bound, and our algorithm will lead to $\frac{8\ln(KT^2)}{\varepsilon} + 2 + \varepsilon$ regret bound.
    Also notice that in the distribution dependent bound, we only add up to $i^*$, which means that the arms which have indices larger than $i^*$ will not contribute explicitly to the regret upper bound. This directly shows that the location of the best arm matters in our algorithm for one-sided MAB model.
    
    \begin{remark}
        Although the arms with indices larger than that of the best arm do not contribute explicitly to the regret bound, they do contribute to the constant $2$ in Eq.\eqref{equ-dep} of Theorem \ref{dep}. The contribution comes from a low probability case, which is shown in the proof in the next section.
    \end{remark}

    In Algorithm \ref{alg-elim}, we assume that we know the time horizon $T$. Now, we apply the standard `doubling trick' to get an algorithm with unknown time horizon $T$, which is shown in Algorithm \ref{alg-doubling}. The distribution independent regret bound is given in Theorem \ref{doubling}.

    \begin{algorithm}
        \caption{Algorithm with unknown time horizon}
        \label{alg-doubling}
        \begin{algorithmic}[1]
            \For{$i=0,1,\dots$}
            \State In time horizon $2^i,2^i+1,\dots,2^{i+1}-1$, run Algorithm \ref{alg-elim} with time horizon $2^i$.
            \EndFor
        \end{algorithmic}
    \end{algorithm}

    \begin{restatable}{theorem}{doublingtrick}\label{doubling}
        The regret of Algorithm \ref{alg-doubling} is bounded by $20\sqrt{T\ln(KT^2)} + 3\log_2 T + 3$.
    \end{restatable}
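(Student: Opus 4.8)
The plan is to run the standard doubling-trick reduction: decompose the play into the doubling phases, bound the regret of each phase with Theorem~\ref{indep}, and sum. Set $m=\lfloor\log_2 T\rfloor$, so Algorithm~\ref{alg-doubling} consists of phases $i=0,1,\dots,m$, where phase $i$ occupies rounds $2^i,\dots,\min(2^{i+1}-1,T)$ and runs Algorithm~\ref{alg-elim} with time horizon $2^i$. Because the pseudo-regret is a sum of the per-round terms $\E[\Delta_{I_t}]$, which are non-negative (as $\Delta_i=\mu_{i^*}-\mu_i\ge 0$), and because each phase restarts Algorithm~\ref{alg-elim} on fresh observations, the total regret is at most the sum over $i$ of the regret of Algorithm~\ref{alg-elim} run with horizon $2^i$. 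For $i<m$ the phase is complete and Theorem~\ref{indep} gives a bound of $4\sqrt{2\cdot 2^i\ln(K4^i)}+3$; for the (possibly truncated) phase $i=m$, running the same algorithm for fewer than $2^m$ rounds only removes non-negative terms from the cumulative regret, so the same quantity is still an upper bound.

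Next I would split $\sum_{i=0}^m\bigl(4\sqrt{2\cdot 2^i\ln(K4^i)}+3\bigr)$ into its constant and main parts. The constant part is $3(m+1)\le 3\log_2 T+3$, matching the lower-order terms of the claim. For the main part I would use $2^i\le T$, hence $4^i=(2^i)^2\le T^2$ and $\ln(K4^i)\le\ln(KT^2)$, to pull the logarithm out of the sum:
\[
\sum_{i=0}^m 4\sqrt{2\cdot 2^i\ln(K4^i)}\le 4\sqrt{2\ln(KT^2)}\sum_{i=0}^m 2^{i/2}.
\]
The remaining geometric sum satisfies $\sum_{i=0}^m 2^{i/2}=\frac{2^{(m+1)/2}-1}{\sqrt2-1}\le\frac{\sqrt2}{\sqrt2-1}\,2^{m/2}\le\frac{\sqrt2}{\sqrt2-1}\sqrt T$, using $2^m\le T$. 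Combining, the main part is at most $4\sqrt2\cdot\frac{\sqrt2}{\sqrt2-1}\sqrt{T\ln(KT^2)}=\frac{8}{\sqrt2-1}\sqrt{T\ln(KT^2)}=(8\sqrt2+8)\sqrt{T\ln(KT^2)}$, and since $8\sqrt2+8<20$ this is below $20\sqrt{T\ln(KT^2)}$. Adding the two parts yields the stated bound $20\sqrt{T\ln(KT^2)}+3\log_2 T+3$.

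There is no real obstacle here — this is a textbook doubling argument — and the only care needed is in two bookkeeping points: (i) justifying that the regret decomposes additively over phases and that a truncated final phase is still covered by Theorem~\ref{indep}, which follows from non-negativity of the per-round pseudo-regret together with the restart structure of Algorithm~\ref{alg-doubling}; and (ii) being slightly careful with constants in the geometric sum so that the leading coefficient stays under $20$ — the key computation being the rationalization $\frac{8}{\sqrt2-1}=8\sqrt2+8\approx 19.31$, which shows the constant $20$ in the statement is essentially what this argument gives.
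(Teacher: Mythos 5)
Your proposal is correct and follows essentially the same route as the paper's proof: decompose over doubling phases, handle the truncated last phase via non-negativity of the per-round pseudo-regret, bound each phase by Theorem~\ref{indep} with $\ln(K4^i)\le\ln(KT^2)$, and sum the geometric series to get the leading constant $8(\sqrt2+1)<20$. The only cosmetic difference is that the paper extends the horizon to $2^{k+1}-1$ rounds before summing rather than truncating the last phase, which amounts to the same observation.
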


%    \subsection{Proof of Theorem \ref{indep} and \ref{dep}}
    
    \subsection{Proof of Theorem \ref{indep}}\label{proof-indep}
    
%    In this section, we will prove Theorem \ref{indep} and \ref{dep}. The proof of distribution independent bound is more straightforward, so we will first prove the distribution independent bound, and then prove the distribution dependent bound.

    Because we want to observe as many arms as possible, we would like to choose an arm with a small index(a small position). 
    In this way, our algorithm maintains a set of arms $S_t$ in each round $t$, which is the set of arms that are possible to be the best arm. 
    We could let $S_t = [K]$ for each round, then this will lead to large regret, so we would like all arms in $S_t$ have means `close' to the mean of the best arm, and the best arm $i^*$ is in the set $S_t$. 
    In this way, we will define ``a procedure is nice at round $t$'' in Definition \ref{procedure} to describe the event that the best arm is in $S_t$ and all of the arms in $S_t$ have means close to that of the best arm. 
    Then we will show in Lemma \ref{hpe} that the procedure is nice at all rounds $t\le T$ with high probability. 
    Finally, we will use this lemma to prove Theorem \ref{indep}.
    To begin with, we have the following definition and a simple lemma.
    
    \begin{definition}\label{sampling}
        We call the sampling is nice at the beginning of round $t$ if $|\hat\mu_{i,t-1} - \mu_i| < \rho_t,\forall i \in S_{t-1}$, where $\rho_t = \sqrt{\frac{\ln (KT^2)}{2(t-1)}},\forall t\ge 2$ and $\rho_1 = \infty$. Let $\mathcal N_t^s$ denote this event.
    \end{definition}
    
    \begin{restatable}{lemma}{samplingnice}\label{sampling-nice-lemma}
        For each round $t\ge 1$, $\prob\{\lnot \mathcal N_t^s\} \le \frac{2}{T^2}$.
    \end{restatable}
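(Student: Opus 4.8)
The plan is to reduce the statement to a standard concentration bound via a union bound over all $K$ arms, with one structural observation needed to make the reduction legitimate. First I would dispose of the case $t=1$: since $\rho_1 = \infty$, the event $\mathcal{N}_1^s$ holds with probability $1$, so $\prob\{\lnot\mathcal{N}_1^s\} = 0 \le 2/T^2$. For $t\ge 2$ the key point is that although $S_{t-1}$ is a random set, the elimination sets are nested, $S_{t-1}\subseteq S_{t-2}\subseteq\cdots\subseteq S_1 = [K]$, and in every round $s \le t-1$ the algorithm plays arm $\min_{i\in S_s} i$ and observes every arm with index at least that value. Hence each arm $i\in S_{t-1}$ has in fact been observed in all rounds $1,2,\dots,t-1$, so its running empirical mean satisfies $\hat\mu_{i,t-1} = \frac{1}{t-1}\sum_{s=1}^{t-1} X_i^{(s)}$ — an average of exactly $t-1$ i.i.d.\ samples from $\nu_i$, which has mean $\mu_i$ and support $[0,1]$.

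Next I would introduce, for each fixed arm $i\in[K]$, the full-sample average $\bar\mu_{i,t-1} := \frac{1}{t-1}\sum_{s=1}^{t-1} X_i^{(s)}$, which is well-defined irrespective of whether arm $i$ was observed and which coincides with $\hat\mu_{i,t-1}$ on the event $\{i\in S_{t-1}\}$ by the observation above. Since the $X_i^{(s)}$ are i.i.d.\ across $s$ with mean $\mu_i$ and lie in $[0,1]$, the Hoeffding inequality gives $\prob\{|\bar\mu_{i,t-1}-\mu_i|\ge\rho_t\} \le 2\exp(-2(t-1)\rho_t^2)$, and plugging in $\rho_t = \sqrt{\ln(KT^2)/(2(t-1))}$ makes the exponent equal to $-\ln(KT^2)$, so this probability is at most $\frac{2}{KT^2}$.

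Finally I would bound $\prob\{\lnot\mathcal{N}_t^s\} = \prob\{\exists\, i\in S_{t-1}: |\hat\mu_{i,t-1}-\mu_i|\ge\rho_t\} \le \prob\{\exists\, i\in[K]: |\bar\mu_{i,t-1}-\mu_i|\ge\rho_t\}$, where the inequality uses that $S_{t-1}\subseteq[K]$ and that $\hat\mu_{i,t-1}=\bar\mu_{i,t-1}$ whenever $i\in S_{t-1}$; a union bound over the $K$ arms then yields at most $K\cdot\frac{2}{KT^2} = \frac{2}{T^2}$. The only delicate step — requiring care rather than calculation — is the first one: arguing that an arm currently in $S_{t-1}$ has been observed in every previous round, so that its empirical mean really is an average of $t-1$ i.i.d.\ draws. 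This rests on the monotonicity of the sets $S_t$ together with the one-sided feedback rule (playing the smallest surviving index reveals all surviving arms); once it is in place, the rest is the routine "replace the random set by $[K]$, then union bound and apply concentration" argument.
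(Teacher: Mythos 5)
Your proof is correct and follows essentially the same route as the paper's: Hoeffding's inequality with $\rho_t = \sqrt{\ln(KT^2)/(2(t-1))}$ giving $2/(KT^2)$ per arm, then a union bound over the $K$ arms (plus the trivial $t=1$ case). Your extra step of replacing the random set $S_{t-1}$ by $[K]$ via the observation that every surviving arm has been observed in all previous rounds is a more careful rendering of what the paper does implicitly (it union-bounds over $S_{t-1}$ directly and relies on $\hat\mu_{i,t-1}$ being an average of $t-1$ i.i.d.\ samples), but it is not a different argument.
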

    
    The proof of this lemma is simple with an application of the Hoeffding's Inequality followed by a union bound. \OnlyInFull{For more detail, please see Appendix \ref{sampling-nice-lemma-pf}.}\OnlyInShort{For more detail, please see Appendix.}
    Then, we have the definition for ``procedure is nice at round $t$'' and the main lemma that shows that the procedure is nice happens uniformly at all rounds with high probability. The formal definition is shown in Definition \ref{procedure} and the lemma is formally stated in Lemma \ref{hpe}.
    
    \begin{definition}\label{procedure}
        We say that the procedure is nice during the algorithm at round $t$ if both of the following are satisfied,
        \begin{enumerate}
            \item $i^*\in S_t$, where $i^* = \arg\max_{i\in [K]} \mu_i$.
            \item $\forall i\in S_t, \mu_{i^*} - \mu_{i} \le 4\rho_t$.
        \end{enumerate}
        Let $\mathcal N_t^p$ denote this event.
    \end{definition}
    \vspace{-2mm}
    \begin{lemma}\label{hpe}
        Let $\cM_t = \bigcap_{s=1}^t \mathcal N_s^p$, then 
        \vspace{-2mm}
        \[\forall t\in [T], \prob\{\lnot \cM_t\} \le \frac{2}{T}.\]
    \end{lemma}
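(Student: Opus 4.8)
The plan is to show that if the sampling is nice at every round $s \le t$ (i.e. the event $\bigcap_{s=1}^t \mathcal N_s^s$ holds), then the procedure is automatically nice at every round $s \le t$, and then invoke Lemma \ref{sampling-nice-lemma} together with a union bound over the $\le T$ rounds to control the failure probability. More precisely, I would prove by induction on $t$ the deterministic implication $\bigcap_{s=1}^t \mathcal N_s^s \subseteq \mathcal M_t$, and then write $\prob\{\lnot \mathcal M_t\} \le \prob\{\bigcup_{s=1}^t \lnot \mathcal N_s^s\} \le \sum_{s=1}^t \prob\{\lnot \mathcal N_s^s\} \le t \cdot \frac{2}{T^2} \le \frac{2}{T}$, which is exactly the claimed bound.

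For the inductive step I would argue as follows, assuming the sampling is nice at rounds $1,\dots,t$ and the procedure is nice at rounds $1,\dots,t-1$ (the base case $t=1$ is immediate since $S_1 = [K] \ni i^*$ and $\rho_1 = \infty$). First, to see $i^* \in S_t$: since $i^* \in S_{t-1}$ by the induction hypothesis and the sampling is nice at round $t$, we have $|\hat\mu_{i^*,t-1} - \mu_{i^*}| < \rho_t$ and $|\hat\mu_{m_{t-1},t-1} - \mu_{m_{t-1}}| < \rho_t$; combining these with $\mu_{m_{t-1}} \le \mu_{i^*}$ gives $\hat\mu_{m_{t-1},t-1} - \hat\mu_{i^*,t-1} < \mu_{m_{t-1}} + \rho_t - (\mu_{i^*} - \rho_t) \le 2\rho_t$, so $i^*$ survives the elimination step and lies in $S_t$. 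Second, for the gap bound: take any $i \in S_t$. By definition of $S_t$, $\hat\mu_{m_{t-1},t-1} - \hat\mu_{i,t-1} \le 2\rho_t$. Since $m_{t-1}$ maximizes $\hat\mu_{\cdot,t-1}$ over $S_{t-1}$ and $i^* \in S_{t-1}$, we have $\hat\mu_{m_{t-1},t-1} \ge \hat\mu_{i^*,t-1} > \mu_{i^*} - \rho_t$ using niceness of the sampling; also $\hat\mu_{i,t-1} < \mu_i + \rho_t$. Chaining, $\mu_{i^*} - \mu_i < \hat\mu_{m_{t-1},t-1} + \rho_t - (\hat\mu_{i,t-1} - \rho_t) = (\hat\mu_{m_{t-1},t-1} - \hat\mu_{i,t-1}) + 2\rho_t \le 4\rho_t$, which is condition (2) of Definition \ref{procedure}. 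Hence $\mathcal N_t^p$ holds, completing the induction.

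The only mildly delicate point — and the step I would be most careful about — is making sure the confidence-radius bookkeeping is internally consistent: the set $S_t$ is formed at round $t$ using the radius $2\rho_t$ and the empirical means $\hat\mu_{\cdot,t-1}$ from $t-1$ rounds, while the niceness event $\mathcal N_t^s$ of Definition \ref{sampling} also uses $\rho_t$ with $\hat\mu_{\cdot,t-1}$, so the same $\rho_t$ lines up on both sides; one just has to track that $S_t \subseteq S_{t-1}$ so that niceness of the sampling at round $t$ (which only controls arms in $S_{t-1}$) indeed applies to every arm we need, including $i^*$ and the argmax $m_{t-1}$. The edge cases $t=1$ (where $\rho_1 = \infty$ makes everything trivially true) and the appearance of $m_{t-1}$ with $S_{t-1}$ nonempty (which holds since $i^* \in S_{t-1}$) should be noted explicitly. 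Beyond that, the argument is a routine "good event implies structural invariant, then union bound" pattern.
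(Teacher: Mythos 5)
Your proposal is correct and follows essentially the same route as the paper: your inductive step (procedure nice at $s-1$ together with $\mathcal N_s^s$ implies $\mathcal N_s^p$) is exactly the contrapositive of the paper's key implication $\mathcal N_{s-1}^p \cap \lnot\mathcal N_s^p \Rightarrow \lnot\mathcal N_s^s$, established by the same inequality chains. The only difference is bookkeeping: you use induction plus a union bound over the sampling events, whereas the paper partitions $\lnot\cM_t$ into the disjoint first-failure events $\cM_{s-1}\cap\lnot\mathcal N_s^p$; both yield $t\cdot\frac{2}{T^2}\le\frac{2}{T}$.
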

    
    \begin{proof}
        We partition the event $\lnot \cM_t$ into disjoint events, we have
        \[\lnot\cM_t = \lnot \mathcal N_1^p \cup (\cM_1\cap\lnot \mathcal N_2^p) \cup \cdots \cup (\cM_{t-1}\cap \lnot \mathcal N_t^p).\]
        \vspace{-2mm}
        Note that $\lnot\cM_t$ is the union of disjoint events, so we have
        \[\prob\{\lnot\cM_t\} = \prob\{\lnot \mathcal N_1^p\} + \sum_{s=2}^t \prob\{\cM_{s-1}\cap \lnot\mathcal N_s^p\}.\vspace{-2mm}\]
        %\vspace{-2mm}
        First, it is obvious that $\prob\{\lnot \mathcal N_1^p\} = 0$, since $\mathcal N_1^p$ will always happen, then we just need to bound $\prob\{\cM_{s-1}\cap \lnot\mathcal N_s^p\}$ for each $2\le s\le t$. We have
        \vspace{-2mm}
        \begin{align*}
        \prob\{\cM_{s-1}\cap \lnot\mathcal N_s^p\} =& \prob\left\{\left(\bigcap_{r=1}^{s-1}\mathcal N_r^p\right) \cap \lnot\mathcal N_s^p\right\} \\
        \le& \prob\{\mathcal N_{s-1}^p \cap \lnot\mathcal N_s^p\}.
        \vspace{-2mm}
        \end{align*}
        Then we prove that $\mathcal N_{s-1}^p \cap \lnot\mathcal N_s^p \Rightarrow \lnot\mathcal N_s^s$. In fact, if $\mathcal N_{s-1}^p$ happens, then we have $i^* \in S_{s-1}$, if $i^* \notin S_s$, then let $m_{s-1} = \arg\max_{i} \hat\mu_{i,s-1}$, we have
        \vspace{-2mm}
        \begin{align*}
        \vspace{-2mm}
        \mu_{i^*} - \hat\mu_{i^*,s-1} \ge& \mu_{m_{s-1}} - \hat\mu_{i^*,s-1} \\
        \ge& \mu_{m_{s-1}} - \hat\mu_{m_{s-1},s-1} + 2\rho_s,
        \vspace{-2mm}
        \end{align*}
        which leads to $\lnot \mathcal N_{s}^s$, since either $\mu_{i^*} - \hat\mu_{i^*,s-1} \ge \rho_s$ or $-\mu_{m_{s-1}} + \hat\mu_{m_{s-1},s-1} \ge \rho_s$ must happen. If $\mathcal N_{s-1}^p$ and $i^*\in S_s$ happens but $\exists i\in S_s,\mu_{i^*} - \mu_{i} > 4\rho_s$, then
        \vspace{-2mm}
        \begin{align*}
        &\mu_{i^*} - \hat\mu_{i^*,s-1} + \hat\mu_{i,s-1} - \mu_{i} \\
        \ge& 4\rho_{s} - \hat\mu_{i^*,s-1} + \hat\mu_{i,s-1}\\
        \ge& 4\rho_s - 2\rho_s\\
        =& 2\rho_s,
        \end{align*}
        %\vspace{-2mm}
        which also leads to $\lnot\mathcal N_s^s$ by the same argument. So we have $\mathcal N_{s-1}^p \cap \lnot\mathcal N_s^p \Rightarrow \lnot\mathcal N_s^s$, then we have $\prob\{\mathcal N_{s-1}^p \cap \lnot\mathcal N_s^p\} \le P(\lnot\mathcal N_s^s) \le \frac{2}{T^2}$ from the previous lemma,
        \vspace{-2mm}
        \begin{align*}
        \prob\{\lnot\cM_t\} \le& \prob\{\lnot \mathcal N_1^p\} + \sum_{s=2}^t \prob\{\cM_{s-1}\cap \lnot\mathcal N_s^p\} \\
        \le& 0 + (t-1)\frac{2}{T^2} \\
        \le& \frac{2}{T}.
        \end{align*}\vspace{-2mm}\qed
        \vspace{-2mm}
    \end{proof}
    
    With the result of the previous lemma, we can prove Theorem \ref{indep}. The proof is just a combination of Lemma \ref{hpe} and direct calculation. We first partition the regret by an event $\cM_T = \bigcap_{j=1}^T \mathcal N_j^p$, which is defined in Lemma \ref{hpe}, representing the event that for all $t\le T$, the procedure is nice at round $t$. From Lemma \ref{hpe}, we know that the event will happen with high probability, and the regret in this case can be bounded easily. Then we just relax the regret in the case that $\cM_T$ does not happen to the worst case and we will complete the proof. \OnlyInFull{The proof of the theorem is straight forward, and we put the proof details in Appendix \ref{indep-proof}.}\OnlyInShort{The proof of the theorem is straight forward, and we put the proof details in Appendix.}

    With Theorem \ref{indep}, we can prove the regret for Algorithm \ref{alg-doubling}. Direct computation will lead to Theorem \ref{doubling}. \OnlyInFull{The detailed proof is shown in Appendix \ref{doubling-proof}.}\OnlyInShort{The detailed proof is shown in Appendix.}
    
    \subsection{Proof of Theorem \ref{dep}}
    
%    As for the distribution dependent bound, we have the following observation derived from the proof of the distribution independent bound. 
%    If the best arm is arm $1$, then our algorithm will have no regret with high probability, and the algorithm will always play arm $1$.
    
%    \haoyu{I revise the original statement into the following} 
The proof of Theorem~\ref{dep} is based on the following key observation.
If arm $j$ has mean value larger than that of arm $j+1$, i.e. $\mu_j \ge \mu_{j+1}$ and $\Delta_j \le \Delta_{j+1}$, our algorithm will first play arm $j$ and find that arm $j+1$ is bad and eliminate arm $j+1$. Then it will play arm $j$ until arm $j$ is eliminated by the algorithm. However, if we exchange arm $j$ and arm $j+1$ such that in this case, $\mu_j < \mu_{j+1}$ and $\Delta_j < \Delta_{j+1}$, our algorithm will first play arm $j$ for several times and find that arm $j$ is bad and eliminate $j$, and then play arm $j+1$ until arm $j+1$ is eliminated. The number of total observations of arm $j$ and arm $j+1$ is the same, 
    but the regret of algorithm in the case of $\Delta_j < \Delta_{j+1}$ is worse then the case of
        $\Delta_j > \Delta_{j+1}$, because we spend more time playing the worse arm $j$ in the first case.
Therefore, the best sequence for our algorithms is 
    $\Delta_1 \le \Delta_2 \le \cdots \le \Delta_K$ with no regret, and the worst sequence is
    $\Delta_1 \ge \Delta_2 \ge \cdots \ge \Delta_K$. 
Similarly, if $i^*$ is the index of the best arm, when its index is fixed, for any sequence of arms before
    $i^*$, we can apply the above idea to do a bubble-sort on $\Delta_j$'s to change it into the worst sequence
    $\Delta_{1} \ge \Delta_2 \ge \cdots \ge \Delta_{i^*}$, and then use this worst sequence to bound the 
    regret.
In the following proof, we apply this bubble-sort idea to the proof of Lemma~\ref{opt}, which provides an 
    upper bound to the optimal solution of a linear integer program.
Then in the proof of Theorem~\ref{dep}, we show that the distribution-dependent regret is upper bounded
    by the optimal solution of the linear integer program.
    
%For any given sequence, 
%    
%    Intuitively, if arm $j$ has mean value larger than that of arm $j+1$, i.e. $\mu_j \ge \mu_{j+1}$, then our algorithm will play more arm $j$ than arm $j+1$, and the regret is better when the instance has $\mu_j < \mu_{j+1}$($\Delta_j > \Delta_{j+1}$). 
%
%    So a `bad' instance for our algorithm is that $\Delta_1 \ge \Delta_2 \ge \cdots \ge \Delta_K$. 
%    Similarly, if $i^*$ denotes the index of the best arm, then $\Delta_{1} \ge \Delta_2 \ge \cdots \ge \Delta_{i^*}$ is also a `bad instance'. Our proof goes as follows: We write the regret of the instance as an optimization problem (linear programming), and then we just show and upper bound to the solution of the optimization problem. The upper bound of the optimization problem is shown in Lemma \ref{opt} and in the proof of Theorem \ref{dep}, we show the linear program that upper bounds the regret.
    
    \begin{lemma}\label{opt}
        Let $\{\Delta_{(i)}\}$ be a permutation of $\{\Delta_i \mid i\le i^*\}$ such that $\Delta_{(1)} \ge \Delta_{(2)} \ge \dots \ge \Delta_{(i^*)} = 0$, 
        and let $C$ be a constant. Then, let $(a_1,\dots, a_{i^*}) \in \mathbb N^{i^*}$ denote the variables in the following optimization problem, the optimal value of the following optimization problem
        \begin{align*}
        \max_{(a_1,\dots, a_{i^*}) \in \mathbb N^{i^*}} & \sum_{j=1}^{i^*} a_j\Delta_j\\
        \text{s.t.} & \sum_{i=1}^j a_i \le \frac{C}{\Delta_j^2} + 1,\forall j\in\{j'| a_{j'} > 0,j\neq i^*\},
        \end{align*}
        is upper bounded by
        \[\Delta_{(1)} + \frac{C}{\Delta_{(1)}} + C\sum_{i=2}^{i^*-1}\left(\frac{1}{\Delta^2_{(i)}} - \frac{1}{\Delta^2_{(i-1)}}\right)\Delta_{(i)}.\]
    \end{lemma}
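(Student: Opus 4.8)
The plan is to prove the lemma in two stages: an \emph{exchange (bubble\-sort) argument} showing that among all orderings of the arms the decreasing one $\Delta_1\ge\Delta_2\ge\cdots$ maximizes the optimum of the program, and then a direct computation bounding the optimum on the already\-sorted instance by the claimed closed form. Before either stage I would make a harmless reduction: the variable $a_{i^*}$ has coefficient $\Delta_{i^*}=0$ in the objective and occurs in no constraint (the index $j=i^*$ is excluded and there are no indices $j>i^*$), so we may assume $a_{i^*}=0$ and treat the program as an optimization over $(a_1,\dots,a_{i^*-1})\in\mathbb N^{i^*-1}$ subject to $\sum_{i=1}^j a_i\le C/\Delta_j^2+1$ for every $j\le i^*-1$ with $a_j>0$. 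Here I use that $\Delta_j>0$ for all $j<i^*$ (true when $i^*$ is the least index of a best arm), which is exactly what makes the terms $1/\Delta_{(i)}^2$ in the bound well defined.

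\textbf{Stage 1 (exchange lemma).} The claim is: if positions $j,j+1\le i^*-1$ satisfy $\Delta_j<\Delta_{j+1}$ and $\Delta'$ is obtained from $\Delta$ by swapping these two entries, then $\mathrm{OPT}(\Delta)\le\mathrm{OPT}(\Delta')$. Given a feasible $a$ for $\Delta$, I would build a feasible $a'$ for $\Delta'$ that agrees with $a$ outside $\{j,j+1\}$ and has $a'_j+a'_{j+1}=a_j+a_{j+1}$, so that all prefix sums $s'_l$ with $l\ne j$ equal the old $s_l$ and only the constraints at positions $j$ and $j+1$ need re\-checking. Write $\delta=\Delta_j$, $D=\Delta_{j+1}$, $\delta<D$. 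If $a_{j+1}>0$, feasibility of $a$ gives $s_{j+1}\le C/D^2+1$, so I set $a'_j=a_j+a_{j+1}$, $a'_{j+1}=0$; the constraint at $j$ becomes $s_{j-1}+a'_j=s_{j+1}\le C/D^2+1$, and the change in objective over $\{j,j+1\}$ is $(a_j+a_{j+1})D-(a_j\delta+a_{j+1}D)=a_j(D-\delta)\ge0$. If $a_{j+1}=0$, I set $a'_{j+1}=a_j$, $a'_j=0$; the constraint at $j+1$ becomes $s_{j-1}+a_j=s_j\le C/\delta^2+1$, which holds, and the objective over $\{j,j+1\}$ is unchanged, equal to $a_j\delta$. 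Iterating over adjacent ascending pairs (a bubble sort on positions $1,\dots,i^*-1$, none involving position $i^*$) gives $\mathrm{OPT}(\Delta)\le\mathrm{OPT}(\Delta_{(\cdot)})$, the optimum on the decreasing instance.

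\textbf{Stage 2 (sorted instance).} Let $d_j=\Delta_{(j)}$ with $d_1\ge\cdots\ge d_{i^*-1}>0$, set $n=i^*-1$, fix a feasible $a$, and let $s_j=\sum_{i\le j}a_i$. First I observe $s_j\le C/d_j^2+1$ for every $j\le n$: if $a_j=0$, let $j_0\le j$ be the largest index with $a_{j_0}>0$ (if none, $s_j=0$), so $s_j=s_{j_0}\le C/d_{j_0}^2+1\le C/d_j^2+1$ since $d_{j_0}\ge d_j$. Then Abel summation gives
\[
\sum_{j=1}^{n}a_j d_j=\sum_{j=1}^{n}(s_j-s_{j-1})d_j=s_n d_n+\sum_{j=1}^{n-1}s_j\,(d_j-d_{j+1}),
\]
and, as each $d_j-d_{j+1}\ge0$, the bounds on $s_j$ yield
\[
\sum_{j=1}^{n}a_j d_j\le\Big(\tfrac{C}{d_n}+d_n\Big)+\sum_{j=1}^{n-1}\Big(\tfrac{C}{d_j^2}+1\Big)(d_j-d_{j+1}).
\]
The non-$C$ terms telescope to $d_1=\Delta_{(1)}$, and a second elementary identity, $\tfrac1{d_n}+\sum_{j=1}^{n-1}\tfrac{d_j-d_{j+1}}{d_j^2}=\tfrac1{d_1}+\sum_{i=2}^{n}\big(\tfrac1{d_i^2}-\tfrac1{d_{i-1}^2}\big)d_i$ (again Abel summation plus reindexing), rewrites the $C$-terms into exactly $\tfrac{C}{\Delta_{(1)}}+C\sum_{i=2}^{i^*-1}\big(\tfrac1{\Delta_{(i)}^2}-\tfrac1{\Delta_{(i-1)}^2}\big)\Delta_{(i)}$. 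Combining with Stage 1 finishes the proof.

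I expect the main obstacle to be Stage 1: the naive ``swap $a_j$ with $a_{j+1}$'' preserves the objective but can violate the constraint at position $j+1$ in $\Delta'$, so one must identify the correct feasibility\-preserving reallocation and treat the corner case $a_{j+1}=0$ separately. Once the exchange step is pinned down, Stage 2 is routine bookkeeping (one prefix\-sum bound, two Abel summations, a telescope); a secondary point to state carefully is the reduction eliminating $a_{i^*}$ together with the strict positivity of the gaps before $i^*$.
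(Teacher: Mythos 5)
Your proposal is correct and takes essentially the same route as the paper: an adjacent-exchange (bubble-sort) argument showing the decreasing gap sequence maximizes the program, followed by summation by parts against the prefix-sum constraints on the sorted instance. The only differences are cosmetic — in the exchange step you merge mass onto the larger-gap position (the paper instead swaps the two variable values, preserving the objective exactly), and you make explicit the prefix-sum bound $s_j \le C/\Delta_{(j)}^2+1$ at positions with $a_j=0$, a point the paper's final computation leaves implicit.
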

    
    \begin{proof}
        Let $\OPT$ denote the optimal value of the original optimization problem
        \begin{align}
        \max_{(a_1,\dots, a_{i^*}) \in \mathbb N^{i^*}} & \sum_{j=1}^{i^*} a_j\Delta_j\label{opt1}\\
        \text{s.t.} & \sum_{i=1}^j a_i \le \frac{C}{\Delta_j^2} + 1,\forall j\in\{j'| a_{j'} > 0,j\neq i^*\},\nonumber
        \end{align}
        and $\OPT'$ denote the optimal value of the modified optimization problem
        \begin{align}
        \max_{(a_1,\dots, a_{i^*}) \in \mathbb N^{i^*}} & \sum_{j=1}^{i^*} a_j\Delta_{(j)}\label{opt2}\\
        \text{s.t.} & \sum_{i=1}^j a_i \le \frac{C}{\Delta_{(j)}^2} + 1,\forall j\in\{j'| a_{j'} > 0,j\neq i^*\},\nonumber
        \end{align}
        where $\Delta_{(1)} \ge \Delta_{(2)} \ge \cdots \ge \Delta_{i^*}$ is a permutation of $\{\Delta_i\}_{i\le i^*}$.
        We first show that $\OPT \le \OPT'$. Suppose $\Delta_{j_0} < \Delta_{j_0+1}$. Let $\bar\Delta_{j_0} = \Delta_{j_0+1},\bar\Delta_{j_0+1} = \Delta_{j_0}$, and for all $k\neq j_0,j_0+1$, $\bar\Delta_k = \Delta_k$,i.e. $\{\bar\Delta_j\}$ is obtained by exchanging 2 adjacent elements in $\{\Delta_j\}$. Let $\overline{\OPT}$ denote the optimal value of the following optimization problem
        \begin{align}
        \max_{(a_1,\dots, a_{i^*}) \in \mathbb N^{i^*}} & \sum_{j=1}^{i^*} a_j\bar\Delta_{j}\label{opt3}\\
        \text{s.t.} & \sum_{i=1}^j a_i \le \frac{C}{\bar\Delta_{j}^{2}} + 1,\forall j\in\{j'| a_{j'} > 0,j\neq i^*\}.\nonumber
        \end{align}
        We just have to show that $\OPT \le \overline{\OPT}$, then $\OPT \le \OPT'$ can be obtained by repeatly exchanging 2 adjacent elements. To prove $\OPT \le \overline{\OPT}$, we just have to show that every feasible solution in the original optimization problem (\ref{opt1}) can be transformed into a feasible solution of the optimization problem (\ref{opt3}), with the same objective value.
        
        Let $x_1,\dots,x_{i^*}$ be any feasible solution of the original optimization problem (\ref{opt1}). Let $\bar x_j = x_j,\forall j\neq j_0,j_0+1$, and let $\bar x_{j_0} = x_{j_0+1},\bar x_{j_0+1} = x_{j_0}$, and it is obvious that the objective value in the optimization problem (\ref{opt1}) and (\ref{opt3}) are the same, since we exchange the coefficient and the variable at $j_0$ and $j_0+1$ at the same time. Then we show that $\bar x_1,\dots,\bar x_{i^*}$ is also a feasible solution in optimization problem (\ref{opt3}). 
        
        First for all $j \neq j_0,j_0+1$, we have $\sum_{i=1}^j \bar x_i \le \frac{C}{\bar\Delta_{j}^{2}} + 1$, since it is equivalent to $\sum_{i=1}^j x_i \le \frac{C}{\Delta_j^2} + 1$ and $\bar x_j > 0$ is equivalent to $x_j > 0$. 
        
%\wei{For the following two cases, I feel that the argument could be simplified, so I made some changes.
%    Please check.
%    In particular, I did not discuss the case of $\bar x_{j_0} = 0$ or $\bar x_{j_0+1} = 0$, since
%    even if they are equal to zero, the inequality still holds. 
%    This also leads to my question that why the LP needs to say that the constraints are only for
%    $j$'s with $a_j > 0$?
%}
%
%\haoyu{Actually, I think the constraint $a_j > 0$ is indispensable. We do not add the assumption that $\Delta_j$ is monotone in the first optimization problem, so at least we should add them in the original optimization problem. But in the second optimization problem when we permute $\Delta$, I think that the constraint $a_j > 0$ can be removed. But in the proof of exchanging adjacent elements, there is also no guarantee that all $\Delta$ are monotone, so we should also write the constraint.}

        Then we consider the variable $\bar x_{j_0} = x_{j_0+1}$. 
        If $x_{j_0+1} > 0$, we have
        \[\sum_{i=1}^{j_0} \bar x_i \le \sum_{i=1}^{j_0+1} \bar x_i = \sum_{i=1}^{j_0+1} x_i \le \frac{C}{\Delta_{j_0+1}^{2}} + 1 = \frac{C}{\bar\Delta_{j_0}^{2}} + 1.\]
        If  $x_{j_0+1} = 0$, then $\bar x_{j_0} = 0$ and we do not have a constraint for
        $j = j_0 $ in problem (\ref{opt3}). 
        
        Next we consider the variable $\bar x_{j_0+1} = x_{j_0}$.
        If $x_{j_0+1} > 0$, using
            $\bar\Delta_{j_0+1} < \bar\Delta_{j_0} $ we have
        \[\sum_{i=1}^{j_0+1} \bar x_i = \sum_{i=1}^{j_0+1} x_i \le  \frac{C}{\Delta_{j_0+1}^{2}} + 1 = \frac{C}{\bar\Delta_{j_0}^{2}} + 1 \le \frac{C}{\bar\Delta_{j_0+1}^{2}} + 1.\]
        If $x_{j_0+1} = 0$ and $x_{j_0} > 0$, we have
        \[\sum_{i=1}^{j_0+1} \bar x_i = \sum_{i=1}^{j_0+1} x_i 
        = \sum_{i=1}^{j_0} x_i 
        \le  \frac{C}{\Delta_{j_0}^{2}} + 1 =  \frac{C}{\bar\Delta_{j_0+1}^{2}} + 1.\]
        If $x_{j_0} = 0$, then $\bar x_{j_0+1} = 0$ and we do not need a constraint for $j= j_0 + 1$
        in problem (\ref{opt3}). 
        
%        
%        
%        If $x_{j_0+1} = \bar x_{j_0} > 0$, then we have
%        \[\sum_{i=1}^{j_0+1} \bar x_i = \sum_{i=1}^{j_0+1} x_i \le \frac{C}{\Delta_{j_0+1}^{2}} + 1 \le \frac{C}{\bar\Delta_{j_0+1}^{2}} + 1,\]
%        since $\bar\Delta_{j_0+1} \le \bar\Delta_{j_0} = \Delta_{j_0+1}$, and we show that 
Therefore, after discussing all cases, we know that
    $(\bar x_1,\dots,\bar x_K)$ is a feasible solution of the optimization problem (\ref{opt3}). 
Then with our previous argument, the optimal value $\OPT'$ of optimization problem (\ref{opt2}) is at least $\OPT$, i.e. $\OPT \le \OPT'$.
        
        Then suppose $\{x_{r_i}\}$ is a feasible solution of the modified optimization problem (\ref{opt2}), we have
        \begin{align*}
        \sum_{i=1}^{i^*} x_{r_i}\Delta_{(i)}
        =& x_{r_1}\Delta_{(1)} + \sum_{i=2}^{i^*}\left(\sum_{j=1}^i x_{r_j} - \sum_{j=1}^{i-1} x_{r_j}\right)\Delta_{(i)}\\
        =&\sum_{i=1}^{i^*-1}\left(\Delta_{(i)} - \Delta_{(i+1)}\right)\sum_{j=1}^i x_{r_j} + \Delta_{(i^*)}\sum_{j=1}^{i^*} x_{r_j}\\
        \le&\sum_{i=1}^{i^*-1}\left(\Delta_{(i)} - \Delta_{(i+1)}\right)\left(\frac{C}{\Delta_{(i)}^2} + 1\right) \\
        =& \Delta_{(1)} + \frac{C}{\Delta_{(1)}} + C\sum_{i=2}^{i^*-1}\left(\frac{1}{\Delta^2_{(i)}} - \frac{1}{\Delta^2_{(i-1)}}\right)\Delta_{(i)},
        \end{align*}
        where we use the fact that $\Delta_{(i^*)} = 0$. So $\OPT'$ is also upper bounded, which complete the proof directly.\qed
    \end{proof}
    
    With the conclusion of the lemma, we can prove Theorem \ref{dep}. 
    The general idea to prove Theorem \ref{dep} is the same as proving Theorem \ref{indep}. We first partition the regret by the event $\cM_T$, which is defined in Definition \ref{procedure}. With Lemma \ref{hpe}, $\cM_T$ will happen with high probability, and we can just consider the regret when $\cM_T$ happens. Then we bound the regret when $\cM_T$ happens from the help of Lemma \ref{opt}.
    
    \begin{proof}[Proof of Theorem \ref{dep}]
        Similar to the proof of Theorem \ref{indep}, we have
        \begin{align*}
        \E[\sum_{t=1}^T (\mu_{i^*} - \mu_{I_t})] \le& \E[\sum_{t=1}^T (\mu_{i^*} - \mu_{I_t})|\cM_T] +\E[\sum_{t=1}^T (\mu_{i^*} - \mu_{I_t})|\lnot \cM_T]\cdot \prob\{\lnot\cM_T\},
        \end{align*}
        and
        \[\E[\sum_{t=1}^T (\mu_{i^*} - \mu_{I_t})|\lnot \cM_T] \le T,\prob\{\lnot\cM_T\} \le \frac{2}{T}.\]
        Suppose that the arms $1,2,\dots,K$ are played for $a_1,a_2,\dots,a_K$ times after $T$ rounds and $\cM_T$ happens, then the regret is $\sum_{i=1}^Ka_i \Delta_i$. Then we show that when $\cM_T$ happens,
        \[\sum_{i=1}^K a_i \Delta_i \le \Delta_{(1)} + \frac{C}{\Delta_{(1)}} + C\sum_{i=2}^{i^*-1}\left(\frac{1}{\Delta^2_{(i)}} - \frac{1}{\Delta^2_{(i-1)}}\right)\Delta_{(i)}.\vspace{-2mm}\]
        First, we only have to consider the arm $j$ with $j < i^*$, since if $\cM_T$ happens, our elimination based algorithm (see Algorithm \ref{alg-elim}) will never choose arm $j>i$ to play, and for arm $i^*$ there is no regret contribution. For arm $j < i^*$, if $a_j \neq 0$, then at the last time the algorithm plays arm $j$, arm $j$ has been observed for $\sum_{i=1}^j a_j -1$ times, since we only delete the arms in set $S$ so $I_t$ must be non-decreasing. Then as $\cM_T$ happens, we have
        \[\Delta_j \le 4\sqrt{\frac{\ln(KT^2)}{2(\sum_{i=1}^j a_i -1)}},\vspace{-2mm}\]
        which will lead to
        \[\sum_{i=1}^j a_i \le \frac{C}{\Delta_j^2} + 1,\]
        where $C = 8\ln(KT^2)$ as defined in Theorem \ref{dep}. Then we can conclude that when $\cM_T$ happens, the regret is bounded by
        \begin{align*}
        \max_{(a_1,\dots, a_{i^*}) \in \mathbb N^{i^*}} & \sum_{j=1}^{i^*-1} a_j\Delta_j\\
        \text{s.t.} & \sum_{i=1}^j a_i \le \frac{C}{\Delta_j^2} + 1,\forall j\in\{j'| a_{j'} > 0,j\neq i^*\}.
        \end{align*}
        Then from Lemma \ref{opt}, we know that the optimal value of the above optimization problem is upper bounded, so we have
        \begin{align*}
        \E[\sum_{t=1}^T (\mu_{i^*} - \mu_{I_t})|\cM_T] 
        &\le \Delta_{(1)} + \frac{C}{\Delta_{(1)}} + C\sum_{i=2}^{i^*-1}\left(\frac{1}{\Delta^2_{(i)}} - \frac{1}{\Delta^2_{(i-1)}}\right)\Delta_{(i)}
        \end{align*}
        Then combine with the previous result, we can finish the proof.\qed
    \end{proof}
    
    Then we have a corollary from this distribution dependent bound.
    
    \begin{corollary}
        Let $\{\Delta_{(i)}\}$ be a permutation of $\{\Delta_i\}$ such that $\Delta_{(1)} \ge \Delta_{(2)} \ge \dots \ge \Delta_{(i^*)} = 0$, and $C = 8\ln(KT^2)$
         as defined in Theorem \ref{dep}, then the regret is bounded by $\left(\frac{C}{\Delta_{(i^*-1)}^2}+1\right)\Delta_{(1)} + 2$.
    \end{corollary}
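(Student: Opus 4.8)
The plan is to obtain the corollary as essentially a one-line consequence of Theorem~\ref{dep}, via elementary manipulation of the telescoping sum appearing there, exploiting that its coefficients are nonnegative.

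First I would recall that Theorem~\ref{dep} bounds the regret by
\[\Delta_{(1)} + \frac{C}{\Delta_{(1)}} + C\sum_{i=2}^{i^*-1}\left(\frac{1}{\Delta^2_{(i)}} - \frac{1}{\Delta^2_{(i-1)}}\right)\Delta_{(i)}+2.\]
Since the claimed bound $\left(\frac{C}{\Delta_{(i^*-1)}^2}+1\right)\Delta_{(1)}+2$ also carries a $\Delta_{(1)}+2$ and a factor $C$ on its non-trivial part, it suffices to show
\[\frac{1}{\Delta_{(1)}} + \sum_{i=2}^{i^*-1}\left(\frac{1}{\Delta^2_{(i)}} - \frac{1}{\Delta^2_{(i-1)}}\right)\Delta_{(i)} \;\le\; \frac{\Delta_{(1)}}{\Delta_{(i^*-1)}^2}.\]

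The key point is that the ordering $\Delta_{(1)}\ge\Delta_{(2)}\ge\cdots\ge\Delta_{(i^*)}$ makes every factor $\tfrac{1}{\Delta^2_{(i)}}-\tfrac{1}{\Delta^2_{(i-1)}}$ nonnegative, so replacing the weight $\Delta_{(i)}$ by the larger value $\Delta_{(1)}$ in each summand only increases the left-hand side. Combining this with $\tfrac{1}{\Delta_{(1)}}=\Delta_{(1)}\cdot\tfrac{1}{\Delta_{(1)}^2}$, the left-hand side is at most
\[\Delta_{(1)}\left(\frac{1}{\Delta_{(1)}^2} + \sum_{i=2}^{i^*-1}\left(\frac{1}{\Delta^2_{(i)}} - \frac{1}{\Delta^2_{(i-1)}}\right)\right),\]
and the inner expression telescopes to $\tfrac{1}{\Delta_{(i^*-1)}^2}$, which yields exactly the desired inequality. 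Multiplying through by $C$ and adding $\Delta_{(1)}+2$ back gives the stated bound.

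There is no real obstacle here; the only things to watch are that the telescoping coefficients have the right sign (guaranteed by the sorted order of the gaps, which is also what legitimizes bounding $\Delta_{(i)}\le\Delta_{(1)}$ inside the sum) and the degenerate case $i^*=1$---where the best arm is played from the outset and no regret accrues---which is implicitly excluded by the appearance of $\Delta_{(i^*-1)}$ in the statement and by the usual convention that empty sums vanish.
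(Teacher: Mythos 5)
Your derivation is correct: bounding each nonnegative telescoping coefficient's weight $\Delta_{(i)}$ by $\Delta_{(1)}$, writing $\tfrac{C}{\Delta_{(1)}}=C\Delta_{(1)}\cdot\tfrac{1}{\Delta_{(1)}^2}$, and telescoping to $\tfrac{C\Delta_{(1)}}{\Delta_{(i^*-1)}^2}$ gives exactly the stated bound, and your handling of the degenerate case $i^*=1$ and the empty sum is fine. The paper states the corollary without proof as an immediate consequence of Theorem~\ref{dep}, and your argument is precisely the intended one-line derivation.
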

    
    \section{Lower Bound}\label{sec-low}
    The lower bound for multi-armed bandit problems has been extensively studied. 
    However, we notice that there is no regret lower bound for the full-information multi-armed bandit under the stochastic case. 
    In this section, we show that the regret is lower bounded by $\Omega(\sqrt{T\log K})$ in this case, 
    which also implies a regret lower bound of $\Omega(\sqrt{T\log K})$ for the one-sided bandit case.
    Comparing with the regret upper bound of $O(\sqrt{T(\log K + \log T)})$ of Theorem~\ref{indep},
    we can see that our elimination algorithm gives almost a tight regret bound.
    
    In this section, we fix a bandit algorithm. Let $I_t$ denote the choice of the algorithm in round $t$. Let $K$ denote the total number of arms. For each $j\in[K]$, let $\mathcal I_j$ denote the problem instance that $\mu_{k} = \frac{1}{2}$, for all $k\neq j$, $\mu_{j} = \frac{1+\varepsilon}{2}$ for some small $\varepsilon>0$, and each arm is a Bernoulli random variable independent from other arms. 
    
    The proof follows from the original proof of lower bound for bandit feedback MAB problem \cite{BCB12}, but we need more careful calculation. 
    The original proof for the bandit feedback regret lower bound is $\sqrt{TK}$, and if we directly apply it to the full information feedback case, we would get $\sqrt{T}$ lower bound.
    With more careful analysis, we could raise this lower bound to $\sqrt{T\log K}$. Following the original analysis, we connect the full information MAB problem with the {\em bandit-with-prediction} problem, 
        in which the algorithm is given the rewards of all arms in the first $T$ rounds, and it needs to decide which is the best arm. 
    We use $y_T$ to denote the output of an algorithm of the bandit-with-prediction problem in this section. 
    Naturally, we can select the arm with the largest cumulative rewards in the first $T$ rounds as $y_T$, and this is
        called Follow-the-Leader strategy.
    Then we use the reverse Chernoff Bound (Lemma \ref{tight-chernoff}) to show the regret lower bound for 
    the Follow-the-Leader strategy, and then we show that Follow-the-Leader strategy has the optimal regret among all the algorithm (up to constants).
    Finally, we reduce the full information MAB problem to the bandit-with-prediction problem to show its lower bound.

%\wei{Try to avoid using quotes, especially single quotes. I removed most of them, and turned some of them into double
%    quotes.}
%
%\wei{I replaced $\epsilon$ with $\varepsilon$, which is usually used in mathematical formulas.}

\begin{restatable}{lemma}{tightchernoff}{\rm\bf (Tightness of Chernoff Bound)} \label{tight-chernoff}
        Suppose $X_1,X_2,\dots,X_n$ are i.i.d Bernoulli random variable with $\prob[X_1 = 1] = \frac{1}{2}$, then there exists absolute constants $c',d,p$ such that for all $0 < \varepsilon <d$ such that $\varepsilon^2 \cdot n > p$,
        \[\prob\left\{\frac{1}{n}\sum_{i=1}^n X_i > \frac{1}{2} + \varepsilon \right\} > e^{-c'n\varepsilon^2}.\]
\end{restatable}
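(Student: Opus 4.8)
The plan is to establish this anti-concentration estimate by lower-bounding the binomial right tail directly: sum the point masses over a window of $\Theta(\sqrt n)$ consecutive values lying just above $n/2+n\varepsilon$, and bound each mass from below via Stirling's formula. Writing $S_n=\sum_{i=1}^n X_i\sim\mathrm{Binomial}(n,1/2)$, we have $\prob\{\frac1n S_n>\frac12+\varepsilon\}=\prob\{S_n>\frac n2+n\varepsilon\}\ge\sum_{k=m_0}^{m_0+\ell}\binom nk 2^{-n}$, where $m_0=\lfloor n/2+n\varepsilon\rfloor+1$ (so that $n/2+n\varepsilon<m_0\le n/2+n\varepsilon+1$) and $\ell=\lceil\sqrt n\rceil$. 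Provided $d$ is a small enough absolute constant and $p$ a large enough one — the hypotheses $\varepsilon<d$ and $n\varepsilon^2>p$ then force $n>p/d^2$ to be large — one checks $m_0+\ell\le n$, so this window is legitimate.

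The next step is to estimate a single term. Let $D(x)=x\ln(2x)+(1-x)\ln(2(1-x))$ be the Kullback--Leibler divergence of $\mathrm{Bernoulli}(x)$ from $\mathrm{Bernoulli}(1/2)$, so that $\frac{n^n}{k^k(n-k)^{n-k}}2^{-n}=e^{-nD(k/n)}$. Stirling's inequalities $\sqrt{2\pi m}\,(m/e)^m\le m!\le e\sqrt m\,(m/e)^m$ then give an absolute constant $c_0>0$ with $\binom nk 2^{-n}\ge\frac{c_0}{\sqrt n}e^{-nD(k/n)}$ for all $n/2\le k\le n$ (the $1/\sqrt n$ coming from $\sqrt{k(n-k)/n}\le\sqrt n/2$). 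Since $D$ is increasing on $[1/2,1]$ and a second-order Taylor expansion yields $D(1/2+x)\le C_0 x^2$ uniformly for $0\le x\le 1/4$, and since every $k$ in the window satisfies $k/n\le\frac12+\varepsilon+\frac{\sqrt n+2}{n}\le\frac12+2\varepsilon$ — this is exactly where $n\varepsilon^2>p$ enters, as it gives $1/\sqrt n\le\varepsilon/\sqrt p$ and $2/n\le 2\varepsilon/p$ — each of the $\ell+1\ge\sqrt n$ terms is at least $\frac{c_0}{\sqrt n}e^{-nD(1/2+2\varepsilon)}\ge\frac{c_0}{\sqrt n}e^{-4C_0 n\varepsilon^2}$.

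Summing over the window then gives $\prob\{\frac1n S_n>\frac12+\varepsilon\}\ge\sqrt n\cdot\frac{c_0}{\sqrt n}e^{-4C_0 n\varepsilon^2}=c_0\,e^{-4C_0 n\varepsilon^2}$, and it remains only to fix the constants. Choose $d$ small and $p$ large enough to validate all of the above inequalities, and put $c'=4C_0+\ln(1/c_0)/p$. Since $n\varepsilon^2>p$, this yields $c_0\,e^{-4C_0 n\varepsilon^2}\ge e^{-c'n\varepsilon^2}$, which is the claimed bound.

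I expect the only real work to be the bookkeeping needed to make the three absolute constants $c',d,p$ mutually consistent — none of it is deep. The one conceptual point worth making explicit is why $n\varepsilon^2>p$ is the natural hypothesis: it is precisely the regime in which a window of $\Theta(\sqrt n)$ integers simultaneously (i) supplies enough terms to cancel the $1/\sqrt n$ prefactor in the Stirling lower bound and (ii) is short enough that $k/n-1/2$ stays within a constant factor of $\varepsilon$, keeping the KL exponent at $O(n\varepsilon^2)$; the $O(1/n)$ rounding corrections in $m_0$ and $\ell$ wash out for the same reason.
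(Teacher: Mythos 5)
Your proposal is correct, and it follows the same basic strategy as the paper: lower-bound the binomial tail by a block of consecutive point masses and control each mass with a Stirling-type lower bound for $\binom{n}{k}$. The differences are in the bookkeeping. The paper works with the symmetric lower tail and takes a window of width $\Theta(\varepsilon n)$ (indices from $l_0=\lfloor(1-2\varepsilon)n/2\rfloor+1$ up to about $(1-\varepsilon)n/2$), bounding every term by the single coefficient at the far edge; there the hypothesis $n\varepsilon^2>p$ is used to make the $\Theta(\varepsilon n)$ multiplicity beat the $1/\sqrt{n}$ Stirling prefactor (i.e.\ $\varepsilon\sqrt{n}\ge\sqrt{p}$), and the exponent is controlled by explicit algebra ($\ln(1+\delta)\le\delta$), yielding concrete constants ($c'=9/2$, $p=6$, $d=1/2$). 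You instead take a window of width $\Theta(\sqrt{n})$ hugging the threshold, so the multiplicity cancels the $1/\sqrt{n}$ prefactor exactly, and you spend the hypothesis $n\varepsilon^2>p$ on keeping the window inside $[\tfrac12,\tfrac12+2\varepsilon]$ so that the KL exponent, bounded via the Taylor estimate $D(\tfrac12+x)\le C_0x^2$, stays $O(n\varepsilon^2)$; the leftover constant $c_0$ is then absorbed into $c'$ using $n\varepsilon^2>p$. Both routes are sound; yours is slightly more modular and conceptual (KL divergence plus a quadratic bound, unspecified absolute constants), the paper's is more pedestrian but produces explicit numerical constants. The only points needing the care you already flag are ensuring $m_0+\ell\le n$ (so the window is legal and Stirling applies to both factorials, i.e.\ $n-k\ge1$) and that $2\varepsilon\le\tfrac14$ for the Taylor bound, all of which follow from choosing $d$ small and $p$ large.
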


    The above lemma is a well-known result. For convenience, we put the proof of this lemma in the appendix. 
\OnlyInFull{See Appendix \ref{lbproof} for more details.
}\OnlyInShort{Please see Appendix for more details.} The following lemma shows that the Follow-the-Leader strategy still could make mistakes on the bandit-with-prediction task.
    
    \begin{restatable}{lemma}{FTLmistake}
        Suppose $\frac{c\ln K}{2\varepsilon^2} \le T \le \frac{c\ln K}{\varepsilon^2}$, for a small enough absolute constant $c$ (which is not the constant in the previous lemma) and $0\le \varepsilon < d$ (where $d$ is the absolute constant in the previous lemma). Consider the algorithm Follow-the-Leader for 
        the bandits-with-prediction problem.
        Then for large enough $K$,
%        \wei{What is $y_T$? My revision stops here.}\haoyu{$y_T$ is the output for the problem `bandit with prediction' with $T$ rounds observation. I used this notation $y_T$ in Lemma 6 to represent the output for any `bandits with prediction' algorithm. The `bandits with prediction' is that given the rewards in first $T$ rounds, the algorithm predicts the best arm and the best arm is denoted as $y_T$. Since we are focusing on the full information part, the algorithm will observe all the rewards in the first $T$ rounds.}
        \[\sum_{j=1}^K\prob\{y_T = j|\mathcal I_j\} \le \frac{K}{4}.\]
    \end{restatable}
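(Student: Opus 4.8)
The plan is to bound $\prob\{y_T = j \mid \mathcal I_j\}$ for each $j$ separately and then sum over $j$. Under $\mathcal I_j$ all $K$ arms are independent, with arm $j$ Bernoulli of mean $\frac12+\frac\varepsilon2$ and every other arm Bernoulli of mean $\frac12$; write $\hat\mu_{i,T}$ for the empirical mean of arm $i$ after $T$ rounds. Since Follow-the-Leader outputs an arm of maximal empirical mean, $\{y_T=j\}$ implies $\hat\mu_{j,T}\ge\hat\mu_{i,T}$ for all $i\neq j$, and this inequality holds regardless of how ties are resolved. I would split $\{y_T=j\}$ according to whether $\hat\mu_{j,T}$ exceeds the threshold $\frac12+\varepsilon$ or not.

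First, on $\{\hat\mu_{j,T}>\frac12+\varepsilon\}$ the good arm has overshot its own mean by more than $\frac\varepsilon2$, so Hoeffding's inequality gives $\prob\{\hat\mu_{j,T}>\frac12+\varepsilon\mid\mathcal I_j\}\le e^{-2T(\varepsilon/2)^2}=e^{-T\varepsilon^2/2}\le K^{-c/4}$, where the last step uses $T\ge\frac{c\ln K}{2\varepsilon^2}$. Second, on $\{\hat\mu_{j,T}\le\frac12+\varepsilon\}$ the inequality $\hat\mu_{j,T}\ge\hat\mu_{i,T}$ forces every one of the $K-1$ suboptimal arms to have empirical mean at most $\frac12+\varepsilon$. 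This is where Lemma~\ref{tight-chernoff} enters: since $\varepsilon<d$ and $T\varepsilon^2\ge\frac{c\ln K}{2}>p$ once $K$ is large, each suboptimal arm has $\prob\{\hat\mu_{i,T}>\frac12+\varepsilon\}>e^{-c'T\varepsilon^2}\ge K^{-c'c}$, now using $T\le\frac{c\ln K}{\varepsilon^2}$. By independence of the suboptimal arms (and their independence from arm $j$), $\prob\{\hat\mu_{i,T}\le\frac12+\varepsilon\ \text{for all}\ i\neq j\mid\mathcal I_j\}<(1-K^{-c'c})^{K-1}\le\exp(-(K-1)K^{-c'c})$.

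Putting the two cases together yields $\prob\{y_T=j\mid\mathcal I_j\}\le K^{-c/4}+\exp(-(K-1)K^{-c'c})$, hence $\sum_{j=1}^K\prob\{y_T=j\mid\mathcal I_j\}\le K\big(K^{-c/4}+\exp(-(K-1)K^{-c'c})\big)$. To finish I would fix the absolute constant $c$ positive and small enough that $c'c<1$; then $K^{-c/4}\to0$ and $(K-1)K^{-c'c}\to\infty$ as $K\to\infty$, so both terms in the parentheses fall below $\frac18$ for all sufficiently large $K$, which gives the claimed bound $\frac K4$.

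The crux of the argument, and the only genuinely delicate point, is this choice of $c$: it must be small enough that $c'c<1$, so that a union over the $K-1$ independent suboptimal arms makes it overwhelmingly likely that at least one of them has empirical mean above $\frac12+\varepsilon$ and hence (in the case where the good arm does not overshoot) strictly above the good arm, forcing $y_T\neq j$. This is precisely why the sample threshold must be $T\asymp\frac{\ln K}{\varepsilon^2}$ and not merely $\frac1{\varepsilon^2}$: with fewer samples the cheap arms still fluctuate enough to fool Follow-the-Leader. Verifying that the hypotheses $\varepsilon<d$ and $\varepsilon^2T>p$ of Lemma~\ref{tight-chernoff} indeed hold for large $K$, and observing that the statement is only meaningful for $\varepsilon>0$, are the remaining routine details.
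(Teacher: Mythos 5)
Your proposal is correct and follows essentially the same route as the paper: the same threshold $\tfrac12+\varepsilon$, Hoeffding for the played-out good arm, the reverse Chernoff bound (Lemma~\ref{tight-chernoff}) plus independence for the $K-1$ suboptimal arms, and the choice of $c$ with $c'c$ small so the failure probabilities vanish for large $K$. The only cosmetic difference is that you bound each $\prob\{y_T=j\mid\mathcal I_j\}$ directly via a union over the two bad events, whereas the paper bounds the single instance $\mathcal I_1$ through the event $B$ and invokes symmetry.
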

    
    The next lemma shows that no other algorithms can do much better than the Follow-the-Leader strategy, for the
    bandit-with-prediction problem.
    
    \begin{restatable}{lemma}{FTLbest} \label{FTL-best}
        Suppose $\frac{c\ln K}{2\varepsilon^2} \le T \le \frac{c\ln K}{\varepsilon^2}$, for a small enough absolute constant $c$, a large enough $K$ and $0\le \varepsilon < d$ where $d$ is the constant in previous lemma. Then for any (deterministic or
        randomized) algorithm for the bandit-with-prediction problem, there exists at least $\lceil K/3 \rceil$ arms $j$ such that
        \[\prob\{y_T = j|\mathcal I_j\} \le \frac{3}{4}.\]
    \end{restatable}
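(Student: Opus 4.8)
\textbf{Proof proposal for Lemma~\ref{FTL-best}.}

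The plan is to show that Follow-the-Leader is \emph{Bayes-optimal} for the $0$--$1$ loss under a uniform prior on the identity of the special arm, so that no algorithm can beat FTL on average over the instances $\mathcal I_1,\dots,\mathcal I_K$; the previous lemma then caps this average, and a Markov-type counting argument produces the required $\lceil K/3\rceil$ arms on which the algorithm is correct with probability at most $\tfrac34$.

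First I would put a uniform prior on the index $J\in[K]$ of the special arm and let $\mathbb P$ denote the resulting joint law of $(J,X)$, where $X=(X^{(t)}_i)_{t\le T,\,i\le K}$ is the full reward table observed by the algorithm. Because the arm columns are mutually independent Bernoulli sequences, the likelihood ratio of $X$ under $\mathcal I_j$ against the all-$\tfrac12$ instance depends only on the $j$-th column and equals $(1+\varepsilon)^{N_j}(1-\varepsilon)^{T-N_j}$, where $N_j=\sum_{t=1}^T X^{(t)}_j$ is the number of ones observed for arm $j$. Hence the posterior obeys $\mathbb P\{J=j\mid X\}\propto (1+\varepsilon)^{N_j}(1-\varepsilon)^{T-N_j}$, whose logarithm is affine and strictly increasing in $N_j$; so the maximum-a-posteriori rule is precisely $\argmax_j N_j$, i.e.\ Follow-the-Leader (with any fixed tie-breaking).

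Next I would use the standard fact that for a fixed loss function randomization does not reduce the Bayes risk, so it suffices to compare against deterministic predictors, each of which has $\mathbb P\{y_T=J\}$ maximized by the MAP rule. Since $\mathbb P\{y_T=J\}=\tfrac1K\sum_{j=1}^K\prob\{y_T=j\mid\mathcal I_j\}$, this yields, for every algorithm,
\[\sum_{j=1}^K\prob\{y_T=j\mid\mathcal I_j\}\;\le\;\sum_{j=1}^K\prob\{y_T^{\mathrm{FTL}}=j\mid\mathcal I_j\}\;\le\;\frac K4,\]
the last step being exactly the preceding lemma, which applies since we are in the same regime $\frac{c\ln K}{2\varepsilon^2}\le T\le\frac{c\ln K}{\varepsilon^2}$, $0\le\varepsilon<d$, $K$ large. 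Finally, if $m$ arms satisfied $\prob\{y_T=j\mid\mathcal I_j\}>\tfrac34$, then $\tfrac34 m\le K/4$, hence $m\le\lfloor K/3\rfloor$, and at least $K-\lfloor K/3\rfloor\ge\lceil K/3\rceil$ arms satisfy $\prob\{y_T=j\mid\mathcal I_j\}\le\tfrac34$.

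The hard part is making the Bayes-optimality step airtight: the algorithm's output may depend on internal randomness and is otherwise an arbitrary measurable function of $X$, and the $\argmax$ defining FTL can be attained simultaneously by several arms. Both are routine — condition on the internal randomness to reduce to deterministic rules, and observe that monotonicity of the posterior in $N_j$ makes every tie-breaking of $\argmax_j N_j$ a genuine MAP estimator — but this is the place where the argument has to be written carefully rather than merely asserted.
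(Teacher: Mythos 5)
Your proposal is correct and follows essentially the same route as the paper: both reduce the claim to showing that Follow-the-Leader maximizes $\sum_{j=1}^K \prob\{y_T=j\mid\mathcal I_j\}$ (the paper does this by summing over sample matrices $\omega$ and noting FTL picks the arm with the largest likelihood $\prob\{\omega\mid\mathcal I_j\}$, which is exactly your uniform-prior MAP argument), then invoke the preceding lemma to bound this sum by $K/4$ and finish with the same counting step. Your Bayesian phrasing and the explicit handling of randomization and ties are just a cleaner write-up of the identical idea, so no substantive difference.
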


    We can now prove the regret lower bound of the full information bandit problem by utilizing the above result for the bandit-with-prediction problem.
    
    \begin{restatable}{theorem}{THMlowerbound}{\rm\bf (Regret lower bound for full information stochastic bandits)}
        Fix time horizon $T$ and the number of arms $K$ such that $\sqrt{c\ln K / T} < d$, where $c,d$ are the constants in Lemma \ref{FTL-best}.
%        \wei{Need to clarify $c$ and $d$ in this theorem.}\haoyu{I add the clarification of $c,d$ before and in all of the previous lemmas.}
         When $K$ is big enough, then for any bandit algorithm, there exists a problem instance such that $\E[R(T)] \ge \Omega(\sqrt{T\log K})$.
    \end{restatable}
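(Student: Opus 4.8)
The plan is to reduce the full-information stochastic bandit problem to the bandit-with-prediction problem and then invoke Lemma~\ref{FTL-best}. Fix $T,K$ with $\sqrt{c\ln K/T}<d$ and set $\varepsilon:=\sqrt{c\ln K/T}$; then $T=\frac{c\ln K}{\varepsilon^2}$, so $\frac{c\ln K}{2\varepsilon^2}\le T\le\frac{c\ln K}{\varepsilon^2}$ and $0\le\varepsilon<d$, which together with $K$ being large enough makes all the premises of Lemma~\ref{FTL-best} hold. Recall the instances $\{\mathcal I_j\}_{j\in[K]}$ from the lower-bound section: on $\mathcal I_j$ arm $j$ is the unique optimal arm with $\mu_j=\frac{1+\varepsilon}{2}$ and $\mu_k=\frac12$ for $k\ne j$, so every suboptimal gap equals $\frac{\varepsilon}{2}$, and writing $N_j(T)=\sum_{t=1}^T\mathbf{1}\{I_t=j\}$ for the number of plays of arm $j$, the pseudo-regret of any algorithm on $\mathcal I_j$ equals $\frac{\varepsilon}{2}\,\E\big[T-N_j(T)\big]$.

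Given an arbitrary (possibly randomized) full-information bandit algorithm $\cA$, I would construct a bandit-with-prediction algorithm $\cA'$ as follows: $\cA'$ is handed all rewards $\{X_i^{(t)}\}_{t\in[T],\,i\in[K]}$, feeds them round by round into a simulation of $\cA$ (legitimate, since in the full-information setting $\cA$ only ever consults past rewards of all arms), records the played arms $I_1,\dots,I_T$, and outputs $y_T:=\argmax_{j\in[K]}N_j(T)$, the most frequently played arm, ties broken arbitrarily. The key elementary observation is that if $y_T\ne j$ then $N_j(T)\le N_{y_T}(T)$ while $N_j(T)+N_{y_T}(T)\le T$, hence $N_j(T)\le T/2$; equivalently, $y_T\ne j\ \Rightarrow\ T-N_j(T)\ge T/2$.

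Now apply Lemma~\ref{FTL-best} to $\cA'$: there exist at least $\lceil K/3\rceil$ arms $j$ with $\prob\{y_T=j\mid\mathcal I_j\}\le\frac34$. Pick any such $j$. Then $\prob\{y_T\ne j\mid\mathcal I_j\}\ge\frac14$, so by the observation above $\E\big[T-N_j(T)\mid\mathcal I_j\big]\ge\frac14\cdot\frac T2=\frac T8$, and therefore the pseudo-regret of $\cA$ on $\mathcal I_j$ is at least $\frac{\varepsilon}{2}\cdot\frac T8=\frac{\varepsilon T}{16}=\frac{1}{16}\sqrt{c\,T\ln K}=\Omega(\sqrt{T\log K})$. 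This is exactly the claim, with $\mathcal I_j$ the promised instance. Finally, since any one-sided full-information algorithm can be simulated by a full-information algorithm with identical behaviour (the one-sided feedback is a subset of the full feedback) and the regret on each fixed instance $\mathcal I_j$ is unchanged by this simulation, the same $\Omega(\sqrt{T\log K})$ lower bound transfers to the one-sided model.

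The routine content here is just the reduction plus the counting inequality; the only mild subtlety is checking that the single choice $\varepsilon=\sqrt{c\ln K/T}$ simultaneously places $T$ in the admissible window $[\frac{c\ln K}{2\varepsilon^2},\frac{c\ln K}{\varepsilon^2}]$ of Lemma~\ref{FTL-best}, keeps $\varepsilon<d$, and makes $\frac{\varepsilon T}{16}$ of order $\sqrt{T\log K}$ — all three following directly from $T=c\ln K/\varepsilon^2$ and the hypothesis $\sqrt{c\ln K/T}<d$. The genuinely hard work (the reverse Chernoff estimate of Lemma~\ref{tight-chernoff} and the "FTL is near-optimal" statement of Lemma~\ref{FTL-best}) is already done, so no further obstacle arises in this final step.
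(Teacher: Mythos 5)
Your proof is correct, and it reaches the conclusion by a genuinely different reduction than the paper, although both hinge on the same key lemma (Lemma~\ref{FTL-best}). The paper applies that lemma \emph{per round}: it treats the algorithm's action $I_t$ as a prediction based on $t$ rounds of full information, so for every $t$ in the window $\frac{c\ln K}{2\varepsilon^2}\le t\le\frac{c\ln K}{\varepsilon^2}$ at least $K/3$ instances satisfy $\prob\{I_t=j\mid\mathcal I_j\}\le\frac34$; it then averages over a uniform prior on the instances $\mathcal I_j$ to show the per-round expected reward is at most $\mu^*-\varepsilon/24$, sums over the (length $\approx T/2$) window, and invokes the probabilistic method to extract one bad instance. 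You instead apply the lemma \emph{once}, at horizon $T$, to the predictor ``most frequently played arm,'' and convert low prediction accuracy into regret via the counting inequality $y_T\ne j\Rightarrow T-N_j(T)\ge T/2$, so that $\prob\{y_T\ne j\mid\mathcal I_j\}\ge\frac14$ directly gives $\E[T-N_j(T)\mid\mathcal I_j]\ge T/8$ and regret $\ge\varepsilon T/16$ on a concrete instance, with no prior or averaging step. Your route is slightly more economical (one application of the lemma, explicit constants, a named bad instance rather than an averaged one), while the paper's per-round formulation follows the classical template and additionally localizes where the regret accrues (every round in the second half contributes $\Omega(\varepsilon)$ on average), which is convenient if one wants bounds over sub-intervals of the horizon. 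Your verification of the lemma's hypotheses under the single choice $\varepsilon=\sqrt{c\ln K/T}$, the identity regret $=\frac{\varepsilon}{2}\E[T-N_j(T)]$ on $\mathcal I_j$, and the final transfer from the full-information model to the one-sided model are all sound.
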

    \OnlyInFull{Please see Appendix \ref{lbproof} for the missing proofs.}\OnlyInShort{Please see Appendix for the missing proofs.}
    
    \section{Numerical Experiments}\label{sec-exp}
    In this section, we show numerical experiments on our elimination based algorithm ELIM
    together with two other algorithms:
    (a) EXP3-RTB algorithm introduced in \cite{DBLP:conf/colt/Cesa-BianchiGGG17}, 
    which solves one-sided full information bandit in the adversarial case, 
    and
    (b) UCB-N algorithm introduced in \cite{CaronKLB12} to solve stochastic multi-armed
    bandit with side information, and it is essentially UCB but updates any arm
    when it has an observation, not just the arm played in the round.
    
    First, we do experiments when all the suboptimal arms have the same mean of $0.6$
        with a gap $\Delta$ towards the best arm,
        similar to our lower bound analysis setting.
    We will show results with different $\Delta$ setting (Fig.\ref{fig:non-random-delta}) and different best arm position (Fig.\ref{fig:non-random-best}). 
    For convenience, we let the reward of each arm follows a Bernoulli distribution.
    Next, we do experiments when the suboptimal arms have means drawn uniformly
    at random from $(0.2,0.6)$ 
    except for the mean of the best arm, which is set to $0.6 + \Lambda$ for a parameter $\Lambda$. 
    We vary the value of $\Lambda$ (Fig.\ref{fig:random-delta}) and the position of the best arms (Fig.\ref{fig:random-best}). 
    
    We use $T$ to denote the total time horizon we choose in the experiments. In most of the experiments, we choose $T = 100000$, but we will choose $T = 200000$ to better distinguish the performance between different algorithms in some cases. We use $K$ to denote the number of arms in our experiments, and we choose $K = 20$ in all of the experiments. We use $Best$ to denote the position of the best arm, which is set to $3,10,17$ in different experiments. 
%    We use $\Delta$ to denote the gap between the mean of the best arm and the mean of any other arms.
 For each experiment, we run 100 times and draw the 99\% confidence interval
     surrounding the curve (all are very narrow regions surrounding the curve).
    
    \begin{figure}[t]
        \centering
        \subfigure[$\Delta = 0.1$]{\includegraphics[width=1.5in]{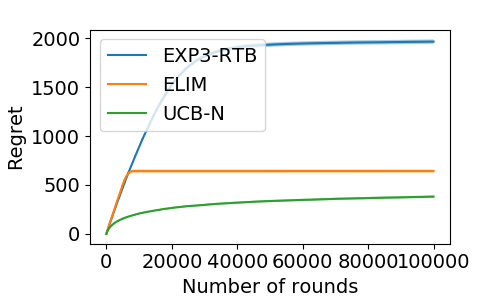}}
        \subfigure[$\Delta = 0.05$]{\includegraphics[width=1.5in]{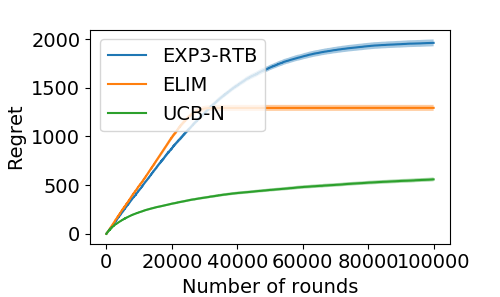}}
        \subfigure[$\Delta = 0.03$]{\includegraphics[width=1.5in]{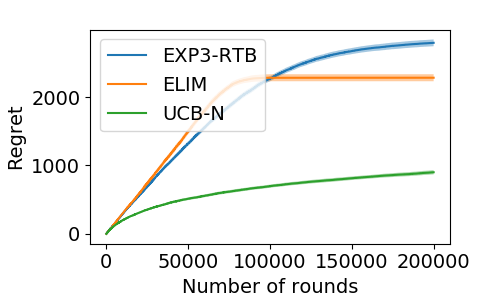}}
        \caption{Uniform-mean suboptimal arms with  $K = 20$, $Best = 17$, and
            varying $\Delta$.
%            The first 2 plots have $T = 100000$ and the last has $T = 200000$ to show the difference between EXP3-RTB and our ELIM algorithm.
        }
        \label{fig:non-random-delta}
    \end{figure}
    
    \begin{figure}[t]
        \centering
        \subfigure[$Best = 10$]{\includegraphics[width=1.5in]{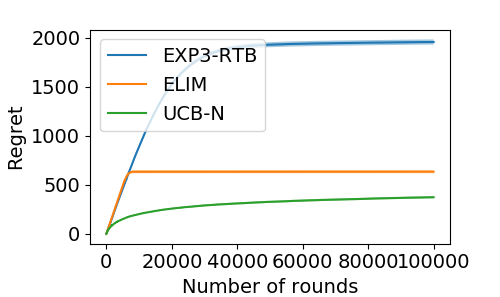}}
        \subfigure[$Best = 3$]{\includegraphics[width=1.5in]{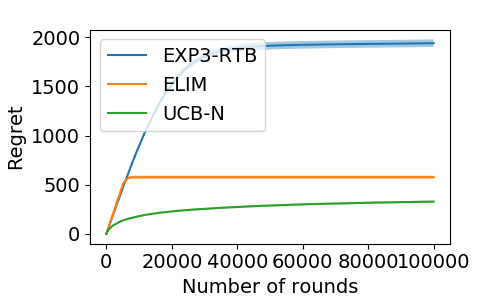}}
        \caption{
            Uniform-mean suboptimal arms with  $K = 20$, $\Delta = 0.1$, and
            varying $Best$.
%            Also have a look at Fig.\ref{fig:non-random-delta}(a). In these three experiments, we set $\Delta = 0.1$. We fix $K=20$ and $T = 100000$ in these cases. The position of the best arm is a variable in these experiments.
        }
        \label{fig:non-random-best}
    \end{figure}
    
    \begin{figure}[t]
        \centering
        \subfigure[$\Lambda = 0.1$]{\includegraphics[width=1.5in]{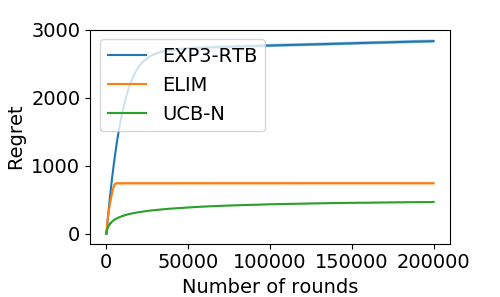}}
        \subfigure[$\Lambda = 0.05$]{\includegraphics[width=1.5in]{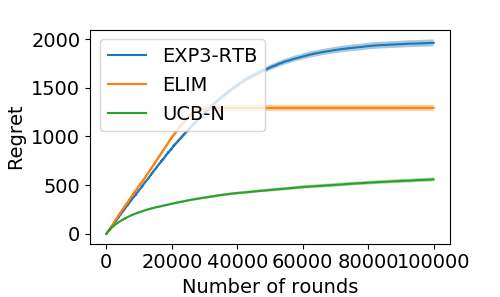}}
        \subfigure[$\Lambda = 0.03$]{\includegraphics[width=1.5in]{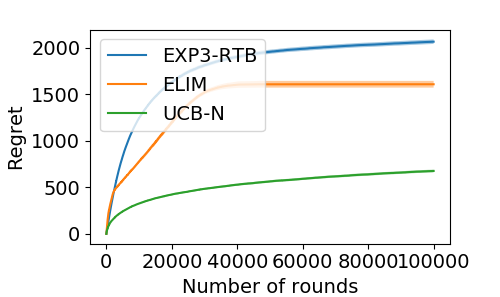}}
        \caption{
            Random-mean suboptimal arms with  $K = 20$, $Best = 17$, and
            varying $\Delta$.
%            In these three experiments, we set the number of arms $K = 20$, and the best arm is $Best = 17$. The mean of each arm is drawn randomly from $[0.2,0.6]$ and we set the mean of the best arm to be $0.6+\Lambda$.
        }
        \label{fig:random-delta}
    \end{figure}
    
    \begin{figure}[t]
        \centering
        \subfigure[$Best=10$]{\includegraphics[width=1.5in]{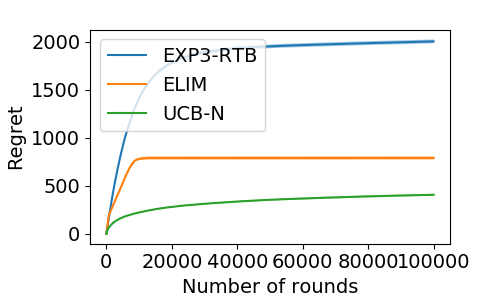}}
        \subfigure[$Best=3$]{\includegraphics[width=1.5in]{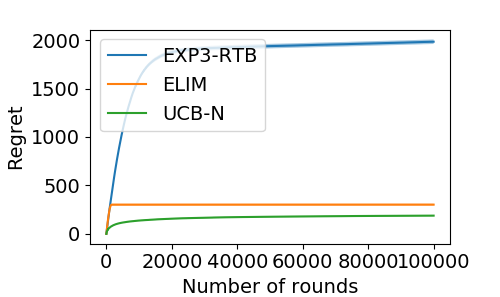}}
        \caption{
            Uniform-mean suboptimal arms with  $K = 20$, $\Delta = 0.1$, and
            varying $Best$.
%            Also have a look at Fig.\ref{fig:random-delta}(b). In these 3 experiments, the mean of the best arm is fixed, which is chosen as $0.6+0.05(\Lambda = 0.05)$. We change the position of the best arm in these 3 experiments.
        }
        \label{fig:random-best}
    \end{figure}
    
    From the above experiments, we can find that
    \begin{enumerate}
        \item In both experiments, when the gap between the mean of the best arm and the mean of others is larger, our algorithm performs much better than the existing EXP3-RTB algorithm.
        \item In the first experiments, when we change the position of the best arm, the regret line does not change so much. In the second experiments where we add more randomness, if the position of the best arm has small index, then our algorithm will perform better. However, the existing EXP3-RTB algorithm does not have this property.
        \item UCB-N consistently outperforms both our algorithm ELIM and the
        EXP3-RTB algorithm.
    \end{enumerate}

Therefore, we can conclude in the stochastic setting, our elimination-based algorithm
    performs much better than the EXP3-RTB algorithm designed for the same problem but on
    the adversarial setting, and UCB-N has the best empirical performance.
The issue with UCB-N is that we cannot derive a tight theoretical regret bound that
    also beats or even match ELIM.
If we simply use UCB regret bound for UCB-N, it would be too loose and it would be
    inferior to our elimination based algorithm, as discussed after
    Theorem~\ref{dep}.
The result in~\cite{CaronKLB12} on UCB-N cannot be applied here either because it requires
    mutually observable cliques in the observation graph but for the one-sided
    full-information case, the only cliques are the trivial singletons, which
    makes their regret bound reduced to the UCB regret bound.
Therefore, our algorithm ELIM is the one that achieves the best theoretical
    regret bound, significantly outperform the EXP3-RTB algorithm for the adversarial case,
    while UCB-N has the best empirical performance with an unknown tight theoretical guarantee.

    \section{Conclusion and Further Work}
    
    In this paper, we study the stochastic one-sided full-information bandit and propose an elimination-based algorithm to solve the problem.
    We provide the upper bounds of the algorithm, and show that it almost matches the lower bound of the problem.
    Our experiment demonstrates that it performs better than the algorithm designed for the adversarial setting.
    To the best of our knowledge, our algorithm achieves the best regret bound so far.
    
    One open problem is definitely on the analysis of UCB-N. As we have discussed, its naive regret bound such as the UCB regret bound would be much worse than our elimination algorithm, but its empirical performance shows better results. 
    We are trying to provide a tighter analysis on UCB-N, but it evades several attempts we have made so far, and thus we left it as a future research question. Another direction is to design other algorithms that better utilizes the one-sided full-information feedback
    structure and achieves both good theoretical and empirical results.
    Other specific feedback structures corresponding to practical applications are also worth further investigation.
    
    \section{Acknowledgement}
    
    	Wei Chen is partially supported by 
    the National Natural Science Foundation of China (Grant No. 61433014).

%
% ---- Bibliography ----
%
% BibTeX users should specify bibliography style 'splncs04'.
% References will then be sorted and formatted in the correct style.
%
\bibliographystyle{splncs04}
\bibliography{papref}

\OnlyInFull{

\clearpage
\appendix
\section*{Supplementary Material}
This part contains the missing proofs in the main text. For convenience, we restate the theorems and lemmas here.

\section{Missing Proof in Section \ref{proof-indep}}
\subsection{Proof of Lemma \ref{sampling-nice-lemma}}\label{sampling-nice-lemma-pf}
{\samplingnice*}
	\begin{proof}[Proof of Lemma \ref{sampling-nice-lemma}]
		The lemma is correct in round $t=1$, so we can assume that $t\ge 2$.
		\begin{align*}
		\prob\{\lnot \mathcal N_t^s\} =& \prob\{\exists i\in S_{t-1},|\hat\mu_{i,t-1} - \mu_i| \ge \rho_t\} \\
		\le & \sum_{i\in S_{t-1}}\prob\{|\hat\mu_{i,t-1} - \mu_i| \ge \rho_t\} \\
		= & \sum_{i\in S_{t-1}}\prob\left\{|\hat\mu_{i,t-1} - \mu_i| \ge \sqrt{\frac{\ln (KT^2)}{2(t-1)}}\right\} \\
		\le & \sum_{i\in S_{t-1}}2\exp \left(-2(t-1)\left(\sqrt{\frac{\ln (KT^2)}{2(t-1)}}\right)^2 \right) \\
		=& \sum_{i\in S_{t-1}}\frac{2}{KT^2} \\
		\le & \frac{2}{T^2}.
		\end{align*}\qed
	\end{proof}

%\subsection{Proof of Lemma}
%
\subsection{Proof of Theorem \ref{indep}}\label{indep-proof}
{\distindepthm*}
\begin{proof}[Proof of Theorem \ref{indep}]
		The regret of the algorithm can be written as $\E[\sum_{t=1}^T (\mu_{i^*} - \mu_{I_t})]$. Then we have
		\begin{align*}
		\E\left[\sum_{t=1}^T (\mu_{i^*} - \mu_{I_t}) \right] 
		=& \E[\sum_{t=1}^T (\mu_{i^*} - \mu_{I_t})|\cM_T]\cdot \prob\{\cM_T\} \\
		&\quad +\E[\sum_{t=1}^T (\mu_{i^*} - \mu_{I_t})|\lnot \cM_T]\cdot \prob\{\lnot\cM_T\} \\
		\le& \E[\sum_{t=1}^T (\mu_{i^*} - \mu_{I_t})|\cM_T] +\E[\sum_{t=1}^T (\mu_{i^*} - \mu_{I_t})|\lnot \cM_T]\cdot \prob\{\lnot\cM_T\}
		\end{align*}
		We have
		\[\E[\sum_{t=1}^T (\mu_{i^*} - \mu_{I_t})|\lnot \cM_T] \le T,\prob\{\lnot\cM_T\} \le \frac{2}{T}.\]
		Then given $\cM_T$, we can conclude that $\forall t\le T,\mu_{i^*} - \mu_{I_t} \le 4\rho_t$. Besides, it is obvious that $\mu_{i^*} - \mu_{I_t} \le 1$, since we made the assumption that the reward has support $[0,1]$, then we have
		\begin{align*}
		\E[\sum_{t=1}^T (\mu_{i^*} - \mu_{I_t})|\cM_T]
		\le& 1 + \E[\sum_{t=2}^T (\mu_{i^*} - \mu_{I_t})|\cM_T] \\
		\le& 1 + \sum_{t=2}^T 4\rho_t\\
		=& 1 + 4\sqrt{\frac{\ln(KT^2)}{2}}\sum_{t=2}^T\frac{1}{\sqrt{t-1}} \\
		=& 1 + 4\sqrt{\frac{\ln(KT^2)}{2}}\left(1 + \sum_{t=2}^{T-1}\frac{1}{\sqrt{t}}\right)\\
		\le& 1 + 4\sqrt{\frac{\ln(KT^2)}{2}}\left(1 + \int_{t=1}^{T-1}\frac{1}{\sqrt{t}}dt\right)\\
		\le& 1 + 4\sqrt{\frac{\ln(KT^2)}{2}}\left(1 + 2\sqrt{x}|_{x=1}^{T-1}\right)\\
		\le& 1 + 4\sqrt{\frac{\ln(KT^2)}{2}}\cdot 2\sqrt{T}\\
		=& 1 + 4\sqrt{2T\ln(KT^2)}.
		\end{align*}
		And we can conclude that
		\begin{align*}
		\E[\sum_{t=1}^T (\mu_{i^*} - \mu_{I_t})]
		\le& \E[\sum_{t=1}^T (\mu_{i^*} - \mu_{I_t})|\cM_T] +\E[\sum_{t=1}^T (\mu_{i^*} - \mu_{I_t})|\lnot \cM_T]\cdot \prob\{\lnot\cM_T\} \\
		\le& 1 + 4\sqrt{2T\ln(KT^2)} +T \cdot \frac{2}{T}\\
		=& 4\sqrt{2T\ln(KT^2)} + 3.
		\end{align*}\qed
	\end{proof}
\subsection{Proof of Theorem \ref{doubling}}\label{doubling-proof}
{\doublingtrick*}
\begin{proof}[Proof of Theorem \ref{doubling}]
		First, it is obvious that, if we set the time horizon as $T$ in the algorithm, but we only play for $T' \le T$ rounds, then the regret is also bounded by $4\sqrt{2T\ln(KT^2)} + 3$. Then suppose that $2^k \le T < 2^{k+1}$, then the total regret is bounded by
		\begin{align*}
		\E[\sum_{t=1}^T (\mu_{i^*} - \mu_{I_t})]
		\le& \E[\sum_{t=1}^{2^{k+1}-1} (\mu_{i^*} - \mu_{I_t})] \\
		\le& \sum_{i=0}^k\E[\sum_{t=2^i}^{2^{i+1}-1}(\mu_{i^*} - \mu_{I_t})] \\
		\le& \sum_{i=0}^k (4\sqrt{2\cdot 2^i \ln(K\cdot 2^{2i})} + 3)\\
		\le& \sum_{i=0}^k 4\sqrt{2 \ln(K\cdot 2^{2k})}\cdot(\sqrt{2})^i + 3(k+1)\\
		=&  4\sqrt{2 \ln(K\cdot 2^{2k})}\frac{\sqrt{2^{k+1}}-1}{\sqrt{2}-1} + 3(k+1)\\
		\le&  4\sqrt{2 \ln(K\cdot T^2)}(\sqrt{2}+1)(\sqrt{2T}-1) + 3(\log_2 T + 1) \\
		\le& 20\sqrt{T\ln(K\cdot T^2)} + 3\log_2 T + 3
		\end{align*}
		where the last inequality comes from the fact that $\sqrt{2}+1<2.5$.\qed
	\end{proof}

\section{Lower Bound Proof}\label{lbproof}
{\tightchernoff*}

The following proof comes from the answer on Stackexchange(see \cite{14476} for the original proof). Before we show the proof of the tightness of Chernoff bound, we need some small claims.

\begin{proposition}[Stirling Approximation Corollary]\label{stirling-approx}
    If we have $1\le l\le n-1$, where $l,n\in\Z_+$, then we have
    \[\binom{n}{l} \ge \frac{1}{e\sqrt{2\pi l}}\left(\frac{n}{l}\right)^l\left(\frac{n}{n-l}\right)^{n-l}.\]
\end{proposition}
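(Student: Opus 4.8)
The plan is to reduce everything to the non-asymptotic Stirling bounds
\[\sqrt{2\pi m}\left(\frac{m}{e}\right)^m \le m! \le e\sqrt{m}\left(\frac{m}{e}\right)^m,\qquad m\ge 1,\]
and apply them termwise to $\binom{n}{l}=\frac{n!}{l!\,(n-l)!}$. Since the hypothesis $1\le l\le n-1$ guarantees $l\ge 1$ and $n-l\ge 1$, I would bound the numerator $n!$ from below and each of $l!$ and $(n-l)!$ from above, obtaining
\[\binom{n}{l}\ge\frac{\sqrt{2\pi n}\,(n/e)^n}{e\sqrt{l}\,(l/e)^l\cdot e\sqrt{n-l}\,((n-l)/e)^{n-l}}.\]

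The next step is routine algebra: the powers of $e$ cancel exactly because $n=l+(n-l)$, leaving $\frac{n^n}{l^l(n-l)^{n-l}}$, and writing $n^n=n^l\cdot n^{n-l}$ turns this into $\left(\frac{n}{l}\right)^l\left(\frac{n}{n-l}\right)^{n-l}$. Collecting the constants, this gives
\[\binom{n}{l}\ge\frac{\sqrt{2\pi n}}{e^2\sqrt{l(n-l)}}\left(\frac{n}{l}\right)^l\left(\frac{n}{n-l}\right)^{n-l}.\]
It then suffices to verify that the prefactor here dominates the claimed one, i.e. $\frac{\sqrt{2\pi n}}{e^2\sqrt{l(n-l)}}\ge\frac{1}{e\sqrt{2\pi l}}$; clearing denominators, this is equivalent to $2\pi\sqrt{n}\ge e\sqrt{n-l}$, which holds since $\sqrt{n-l}\le\sqrt{n}$ and $2\pi>e$. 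Chaining the two displayed inequalities completes the argument.

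This is a standard estimate, so there is no genuine obstacle; the only thing to be careful about is the choice of Stirling bounds. Using the upper bound twice (once for $l!$, once for $(n-l)!$) costs a factor $e^2$ in the denominator, and one needs the clean $e\sqrt{m}$ form — rather than, say, $\sqrt{2\pi m}\,e^{1/(12m)}$ — so that the $2\pi$ harvested from the lower bound on $n!$ comfortably absorbs it via the elementary comparison $2\pi>e$. I would therefore state the precise bounds being used at the start of the proof.
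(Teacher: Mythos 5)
Your proof is correct and follows essentially the same route as the paper's: apply non-asymptotic Stirling bounds to $n!$, $l!$, and $(n-l)!$, cancel the powers of $e$, rewrite $n^n/(l^l(n-l)^{n-l})$ as $\left(\frac{n}{l}\right)^l\left(\frac{n}{n-l}\right)^{n-l}$, and compare prefactors. The only difference is bookkeeping: the paper retains Robbins' correction term $e^{\lambda}$ with $\frac{1}{12m+1}\le\lambda\le\frac{1}{12m}$ and bounds the combined correction below by $e^{-1}$, whereas you absorb it into the clean upper form $m!\le e\sqrt{m}\,(m/e)^m$ (which is indeed valid for all $m\ge 1$, with equality at $m=1$) and recover the constant via $2\pi>e$ and $\sqrt{n-l}\le\sqrt{n}$ — both reduce to the same elementary prefactor comparison.
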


\begin{proof}
    From the Stirling's approximation(see \cite{robbins1955remark} for more detail), we have
    \[n! = \sqrt{2\pi n}\left(\frac{n}{e}\right)^ne^{\lambda},\]
    where $\frac{1}{12n+1}\le \lambda \le \frac{1}{12n}$.
    Since $\binom{n}{l} = \frac{n!}{l!}{(n-l)!}$, we have
    \begin{align*}
        \binom{n}{l} =& \frac{n!}{l!}{(n-l)!} \\
        \ge&\frac{\sqrt{2\pi n}\left(\frac{n}{e}\right)^n}{\sqrt{2\pi l}\left(\frac{l}{e}\right)^l\sqrt{2\pi {(n-l)}}\left(\frac{{n-l}}{e}\right)^{n-l}}\cdot \exp{\left(\frac{1}{12n+1} - \frac{1}{12l} - \frac{1}{12(n-l)}\right)} \\
        \ge& \frac{1}{e\sqrt{2\pi l}}\left(\frac{n}{l}\right)^l\left(\frac{n}{n-l}\right)^{n-l},
    \end{align*}
    where we use the fact that $\frac{1}{12n+1} - \frac{1}{12l} - \frac{1}{12(n-l)} \ge -1$.
    \qed
\end{proof}

\begin{proof}[Proof of Lemma \ref{tight-chernoff}]
We will show that, if $\varepsilon^2 n > 6$, then for any $0 < \varepsilon \le \frac{1}{2}$, we have
\[\prob\{\frac{1}{n}\sum_{i=1}^n X_i < \frac{1}{2}(1 - \varepsilon)\} \ge e^{-\frac{9}{2}n\varepsilon^2},\]
which will conclude the proof of Lemma \ref{tight-chernoff}.

We have
\[\prob\{\frac{1}{n}\sum_{i=1}^n X_i < \frac{1}{2}(1 - \varepsilon)\} = \frac{1}{2^n}\sum_{l=0}^{\lceil\frac{n}{2}(1 - \varepsilon)-1\rceil}\binom{n}{l}.\]
We fix $l_0 = \lfloor \frac{(1-2\varepsilon)n}{2}\rfloor+1$. Since the terms $\binom{n}{l}$ is increasing in terms of $l$ when $0 \le l \le \lfloor\frac{n}{2}(1 - \varepsilon)\rfloor$, we know that
\[\binom{n}{l} \ge \binom{n}{l_0},\forall l_0 \le l \le \lceil\frac{n}{2}(1 - \varepsilon)-1\rceil.\]
Then, we can lower bound the following term $\sum_{l=0}^{\lceil\frac{n}{2}(1 - \varepsilon)-1\rceil}\binom{n}{l}$ by
\begin{align*}
    \sum_{l=0}^{\lceil\frac{n}{2}(1 - \varepsilon)-1\rceil}\binom{n}{l} \ge& \sum_{l=0}^{l_0-1} 0 + \sum_{l=l_0}^{\lceil\frac{n}{2}(1 - \varepsilon)-1\rceil}\binom{n}{l_0} \\
    \ge& \left(\frac{n}{2}(1 - \varepsilon)-1 - \frac{(1-2\varepsilon)n}{2}\right)\binom{n}{l_0} \\
    =& \left(\frac{\varepsilon n}{2}-1\right)\binom{n}{l_0}.
\end{align*}
Then the probability can be bounded by
\begin{align*}
    \prob\{\frac{1}{n}\sum_{i=1}^n X_i < \frac{1}{2}(1 - \varepsilon)\} =& \frac{1}{2^n}\sum_{l=0}^{\lceil\frac{n}{2}(1 - \varepsilon)-1\rceil}\binom{n}{l}\\
    \ge& \frac{1}{2^n}\left(\frac{\varepsilon n}{2}-1\right)\binom{n}{l_0} \\
    \ge& \frac{1}{2^n}\left(\frac{\varepsilon n}{2}-1\right)\frac{1}{e\sqrt{2\pi l_0}}\left(\frac{n}{l_0}\right)^{l_0}\left(\frac{n}{n-l_0}\right)^{n-l_0}.
\end{align*}
Then we can show that $\prob\{\frac{1}{n}\sum_{i=1}^n X_i < \frac{1}{2}(1 - \varepsilon)\} > e^{-\frac{9}{2}n\varepsilon^2}$ by showing that
\[\left(\frac{\varepsilon n}{2}-1\right)\frac{1}{e\sqrt{2\pi l_0}} \ge e^{-\frac{1}{2}\varepsilon^2 n},\]
and
\[\frac{1}{2^n}\left(\frac{n}{l_0}\right)^{l_0}\left(\frac{n}{n-l_0}\right)^{n-l_0} \ge e^{-4\varepsilon^2 n}.\]
We first prove that $\left(\frac{\varepsilon n}{2}-1\right)\frac{1}{e\sqrt{2\pi l_0}} \ge e^{-\frac{1}{2}\varepsilon^2 n}$. Since we assume that $\varepsilon^2 n > 3$ and $\varepsilon \le \frac{1}{2}$, we have $\varepsilon n > 6$. Then we have $\frac{\varepsilon n}{2}-1 > \frac{2}{3}\frac{\varepsilon n}{2} = \frac{\varepsilon n}{3}$. We just have to prove that $\frac{\varepsilon n}{3}\frac{1}{e\sqrt{2\pi l_0}} \ge e^{-\frac{1}{2}\varepsilon^2 n}$. Since we assume that $\varepsilon^2 n > 6$, we have $e^{-\frac{1}{2}\varepsilon^2 n} < e^{-3} <0.04 $. As for the left hand side, we have $l_0 \le \frac{n}{2}$, and we have
\[\frac{\varepsilon n}{3}\frac{1}{e\sqrt{2\pi l_0}} \ge \frac{\varepsilon n}{3}\frac{1}{e\sqrt{n\pi}} = \frac{\varepsilon\sqrt{n}}{3e} \ge \frac{\sqrt{6}}{3e} > 0.1 > e^{-\frac{1}{2}\varepsilon^2 n}.\]
Then we prove that
\[\frac{1}{2^n}\left(\frac{n}{l_0}\right)^{l_0}\left(\frac{n}{n-l_0}\right)^{n-l_0} \ge e^{-4\varepsilon^2 n}.\]
Let $\delta$ takes the value such that $l_0 = (1-\delta)\frac{n}{2}$. From the definition of $l_0$, we have $\delta \le 2\varepsilon$, and it suffice to show that
\[\frac{1}{2^n}\left(\frac{n}{l_0}\right)^{l_0}\left(\frac{n}{n-l_0}\right)^{n-l_0} \ge e^{-\delta^2 n}.\]
The above inequality is equivalent to (by taking both sides to the $-\frac{1}{l_0}$ power)
\[2^{\frac{n}{l_0}}\cdot\frac{l_0}{n}\left(\frac{n-l_0}{n}\right)^{\frac{n-l_0}{l_0}} \le e^{\frac{\delta^2 n}{l_0}},\]
which is also equivalent to
\[\frac{2l_0}{n}\left(\frac{2(n-l_0)}{n}\right)^{\frac{n-l_0}{l_0}} \le e^{\frac{\delta^2 n}{l_0}}.\]
Substitute $l_0 = (1-\delta)\frac{n}{2}$ into the above inequality, we get the following equivalent inequality,
\[\frac{2(1-\delta)\frac{n}{2}}{n}\left(\frac{2(n-(1-\delta)\frac{n}{2})}{n}\right)^{\frac{n-(1-\delta)\frac{n}{2}}{(1-\delta)\frac{n}{2}}} \le e^{\frac{\delta^2 n}{(1-\delta)\frac{n}{2}}},\]
which can be simplified to the following form
\[(1-\delta)\left(1+\delta\right)^{\frac{2}{1-\delta}-1} \le e^{\frac{2\delta^2}{1-\delta}}.\]
Taking the logarithm on both sides, and use the inequality $\ln z\le z$, we have
\begin{align*}
    \ln\text{LHS} =& \ln(1-\delta) + \left(\frac{2}{1-\delta}-1\right)\ln\left(1+\delta\right) \\
    \le& -\delta + \left(\frac{2}{1-\delta}-1\right)\delta \\
    =& \frac{2\delta-2\delta(1-\delta)}{1-\delta} \\
    =& \ln\text{RHS}.
\end{align*}
Then we complete the proof of Lemma \ref{tight-chernoff}. \qed
\end{proof}

{\FTLmistake*}
	\begin{proof}
	We just have to prove that for large enough $K$, $\prob\{y_T = 1|\mathcal I_1\} \le \frac{1}{4}$ if $T \le \frac{c\ln K}{\varepsilon^2}$ for a small enough absolute constant $c$, and the lemma will be proved by the symmetrization of the Follow-the-Leader algorithm. 
	Let $X_{i,t}$ denotes the value of arm $i$ in time $t$, forall $i\in [K],t\in [T]$. Let event $A$ denote the event that FTL chooses arm $1$ after round $T$, and define $B$ to be the event such that
	\[\{\frac{1}{T}\sum_{t=1}^TX_{1,t} \le \frac{1+2\varepsilon}{2},\exists k>1,\frac{1}{T}\sum_{t=1}^TX_{k,t} > \frac{1+2\varepsilon}{2}\}\]
	Because $B\Rightarrow \lnot A$, so we have
	\[\prob\{A\} = 1-\prob\{\lnot A\} \le 1-\prob\{B\}.\]
	Then we provide a lower bound of the event $B$, which lead to an upper bound of the event $A$. From the tightness of chernoff bound (see Lemma \ref{tight-chernoff}), for large enougth $K$, we have
	\[\prob\{\frac{1}{T}\sum_{t=1}^TX_{k,t} > \frac{1+2\varepsilon}{2} = \frac{1}{2} + \varepsilon\} > e^{-c'T\varepsilon^2} \ge e^{-c'c\ln K},\]
	for $k > 1$ and an absolute constant $c'$,
	where the last inequality uses the assumption that $T \le \frac{c\ln K}{\varepsilon^2}$. 
	Then from the chernoff bound, we also have
	\[\prob\{\frac{1}{T}\sum_{t=1}^TX_{1,t} \le \frac{1+2\varepsilon}{2}\} \ge 1 - e^{-2T\varepsilon^2} \ge 1-e^{-c\ln K},\]
	where the last inequality uses the assumption that $T \ge \frac{c\ln K}{2\varepsilon^2}$. 
	Then from the independence of the arms, we have
	\[\prob\{B\} \ge (1-e^{-c\ln K})(1-(1-e^{-c'c\ln K})^{K-1}).\]
	Let $cc'\le \frac{1}{2}$ and let $K$ big enough, we have
	\[\prob\{A\} \le 1 - \prob\{B\} \le \frac{1}{4}.\]
	\qed
\end{proof}

{\FTLbest*}
	\begin{proof}
	To prove that there exists at least $\lceil K/3\rceil$ arms $j$ such that $\prob\{y_T = j|\mathcal I_j\} \le \frac{3}{4}$, we prove that
	\[\sum_{j=1}^K\prob\{y_T = j|\mathcal I_j\} \le \frac{K}{4},\]
	since if there are less than $\lceil K/3\rceil$ arms $j$ such that $\prob\{y_T = j|\mathcal I_j\} \le \frac{3}{4}$, we will have
	\[\sum_{j=1}^K\prob\{y_T = j|\mathcal I_j\} \ge (K - \lceil K/3\rceil) \times \frac{3}{4} > \frac{K}{4},\]
	for $K$ large enough. From the previous lemma, we just have to prove that Follow-the-Leader maximizes $\prob\{y_T = j|\mathcal I_j\}$. Let $\Omega$ denote the set of 
	all matrices $M\in \{0,1\}^{K\times T}$. Then for any fixed algorithm, we can view it as a function $\mathcal A:\Omega \to \mathbb R^K, \mathcal A(\omega) = (p_1(\omega),\dots,p_K(\omega))$, 
	where $p_k(\omega)$ denotes the probability that the algorithm $\mathcal A$ chooses the arm $k$ given a sample $\omega\in \{0,1\}^{K\times T}$. 
	%		\wei{Is $\mathcal A$ a randomized algorithm? If so, we should change the lemma statement to a randomized algorithm.
	%		Also need to say that $\sum_{j=1}^K p_j(\omega) = 1$, right?}
	%		\haoyu{It seems that we only have to show that the conclusion holds when the algorithm is deterministic. I think that all of the proof of lower bounds assume that the algorithm is deterministic, and then the result can generalized to randomized case by the Fubini's theorem.}
	Then we have
	\begin{align*}
	\sum_{j=1}^K\prob\{y_T = j|\mathcal I_j\} 
	=& \sum_{j=1}^K\sum_{\omega\in\Omega}\prob\{y_T = j|\omega, \mathcal I_j\}\prob\{\omega|\mathcal I_j\}\\
	=& \sum_{j=1}^K\sum_{\omega\in\Omega}\prob\{y_T = j|\omega\}\prob\{\omega|\mathcal I_j\}\\
	=&\sum_{j=1}^K\sum_{\omega\in\Omega}p_j(\omega)\prob\{\omega|\mathcal I_j\}\\
	=&\sum_{\omega\in\Omega}\sum_{j=1}^Kp_j(\omega)\prob\{\omega|\mathcal I_j\}.
	\end{align*}
	%		\wei{I added the following explanation to make it easy to follow. Please check.}\haoyu{This explanation is correct.}
	Note that $\prob\{\omega|\mathcal I_j\} = \left(\frac{1}{2}\right)^{(K-1)T} (\frac{1}{2}+\varepsilon)^{|\omega_{j\cdot}|}  (\frac{1}{2}-\varepsilon)^{T-|\omega_{j\cdot}|}  $,
	where $|\omega_{j\cdot}|$ is the number of $1$'s in the $j$-th row of sample matrix $\omega$.
	Follow-the-Leader algorithm would pick $j$ with the largest $|\omega_{j\cdot}|$, which also makes $\prob\{\omega|\mathcal I_j\}$ the largest.
	Therefore, Follow-the-Leader algorithm maximizes $\sum_{j=1}^Kp_j(\omega)\prob\{\omega|\mathcal I_j\}$ for any $\omega \in \Omega$, then we complete the proof. \qed
\end{proof}

{\THMlowerbound*}
\begin{proof}
	We first fix $\varepsilon$, and we will adjust this variable later. We also assume that $T \le \frac{c\ln K}{\varepsilon^2}$, where $c$ is the absolute constant in the previous lemma. Then for any bandit algorithm, let $I_t$ denote the arm chosen in round $t$. Then, denote
	\[S_t = \left\{\text{arms}\ j:\prob\{I_t = j|\mathcal I_j\}\le \frac{3}{4}\right\}.\]
	Then from the previous lemma, we know that $|S_t| \ge \frac{K}{3}$ when $\frac{c\ln K}{2\varepsilon^2} \le t \le \frac{c\ln K}{\varepsilon^2}$. 
	%		\wei{The previous lemma is for $T$, but here is for $t$. Is it for any $t$, or $t$ needs to have some constraint?}
	%		\haoyu{$t$ needs to satisfy some constraints. I just add it before.}
	We consider a uniform distribution over problem instances $\mathcal I_j$. We want to prove that the expected regret lower bound on the instances is $\Omega(\sqrt{T\log K})$, and if we have the previous argument, there must exist an instance such that the regret is $\Omega(\sqrt{T\log K})$. Next, we lower bound the expected regret of the instances in a fixed round $t$. We first bound $\E[\mu_{I_t}|\mathcal I_j,j\in S_t]$, and we have
	\begin{align*}
	\E[\mu_{I_t}|\mathcal I_j,j\in S_t]
	\le& \prob\{I_t = j|\mathcal I_j,j\in S_t\}\E[\mu_{I_t}|I_t = j,\mathcal I_j,j\in S_t] \\
	&\quad + \prob\{I_t \neq j|\mathcal I_j,j\in S_t\}\E[\mu_{I_t}|I_t \neq j,\mathcal I_j,j\in S_t]\\
	=& \frac{1+\varepsilon}{2}\prob\{I_t = j|\mathcal I_j,j\in S_t\} + \frac{1}{2}\prob\{I_t \neq j|\mathcal I_j,j\in S_t\} \\
	=& \frac{1}{2} + \frac{\varepsilon}{2}\prob\{I_t = j|\mathcal I_j,j\in S_t\} \\
	\le& \frac{1}{2} + \frac{\varepsilon}{2}\cdot \frac{3}{4} \\
	=& \mu^* - \frac{\varepsilon}{8}
	\end{align*}
	With the previous result, we can bound the expected pseudo-regret in a fixed round $t$. We have
	\begin{align*}
	\E_{j\sim[K]}\E[\mu_{I_t}|\mathcal I_j] =& \sum_{j=1}^K\frac{1}{K}\E[\mu_{I_t}|\mathcal I_j] \\
	=& \frac{1}{K}\sum_{j=1}^K(\prob\{j\in S_t\}\E[\mu_{I_t}|\mathcal I_j,j\in S_t] + \prob\{j\notin S_t\}\E[\mu_{I_t}|\mathcal I_j,j\notin S_t]) \\
	\le& \frac{1}{K}\sum_{j=1}^K(\prob\{j\in S_t\}(\mu^* - \frac{\varepsilon}{8}) + \prob\{j\notin S_t\}\mu^*)\\
	=& \mu^* - \frac{1}{K}\sum_{j=1}^K\I\{j\in S_t\}\frac{\varepsilon}{8}\\
	\le& \mu^* - \frac{\varepsilon}{24}.
	\end{align*}
	Then choose $\varepsilon = \sqrt{c\ln K / T}$, and sum over $t$ which satisfies $\frac{c\ln K}{2\varepsilon^2} \le t \le \frac{c\ln K}{\varepsilon^2}$, we complete the proof. 
	%		\wei{The logic here needs to be clarified. When we set $\varepsilon = \sqrt{c\ln K / T}$, we have $\frac{c\ln K}{2\varepsilon^2} = T/2$ and 
	%			$\frac{c\ln K}{\varepsilon^2} = T$.
	%			We want to say summing over all $t$ with $ T/2 \le t \le T$, right?
	%		}
	%		\haoyu{Yes, that is a typo. I change the $T$ into $t$.}
	\qed
\end{proof}

}

\end{document}